\documentclass{article} 
\usepackage{paper,times}

\usepackage{amsmath,amsfonts,bm}

\def\eqref#1{equation~\ref{#1}}

\def\1{\bm{1}}

\DeclareMathAlphabet{\mathsfit}{\encodingdefault}{\sfdefault}{m}{sl}
\SetMathAlphabet{\mathsfit}{bold}{\encodingdefault}{\sfdefault}{bx}{n}

\usepackage{hyperref}
\usepackage{url}
\usepackage{booktabs}
\usepackage{amsmath}
\usepackage{graphicx}
\usepackage{multirow}
\usepackage{amssymb}
\usepackage{amsthm}

\newtheorem{theorem}{Theorem}
\newtheorem{lemma}{Lemma}

\usepackage{pgfplots}
\pgfplotsset{compat=1.18} 
\usepgfplotslibrary{groupplots}
\usepackage{wrapfig}
\usepackage{float}

\title{Masked Topology Modeling for Self-Supervised Learning on Parametric CAD}

\iclrfinalcopy
\author{\textbf{Heinrich Jiang} \\
  StoryGold AI \\
  \texttt{heinrich@storygold.com} \\
  \And
  \textbf{Jennifer Jang} \\
  StoryGold AI \\
  \texttt{jennifer@storygold.com} \\
}

\begin{document}

\maketitle

\begin{abstract}
Computer aided design (CAD) is ubiquitous: virtually any modern object was designed using editable CAD tools. However, with the shortage of available CAD datasets in its native editable and parametric format, boundary representation (B-Rep), it is ever more important to develop data-efficient methods for this domain.

We present a new self-supervised pretraining task \emph{Masked
Topology Modeling} (MTM) that leverages the face-adjacency graph, an induced structure unique to B-reps that the encoder can be asked to reconstruct. MTM masks a fraction of
edges and trains a small head to predict each masked edge's convexity and curve type from the encoder's post-message-passing face features.

We combine MTM with a MoCo-style momentum-queue contrastive learning over B-rep-aware augmentations, a BFS-connected face-region masked-reconstruction objective, and pretraining on the ABC dataset and our new procedurally generated dataset to show strong performance on a number of benchmarks.
\end{abstract}

\section{Introduction}

Computer-Aided Design (CAD) is a foundational technology that has  revolutionized manufacturing \citep{groover1983cad}, modern engineering, construction \citep{heesom2004trends}, architecture \citep{szalapaj2013cad}, healthcare \citep{fujita2008computer}, and product development \citep{liu2021fast}. By replacing time-consuming and error-prone manual drafting, its ability to precisely build complex 3D models allows designers to visualize, simulate, and test products before being sent for manufacturing or construction. 

Boundary Representation (B-rep) is the standard format used by modern CAD systems to define and edit solid 3D models. Unlike mesh formats which approximate a shape using a collection of flat polygons \citep{bernardini1999automatic}, B-rep defines an object by its exact, mathematical boundaries. This "watertight" model is composed of two core components: {\it geometry}, which provides the precise mathematical description (such as planes, cylinders, or complex NURBS curves) for where those elements exist in space, and {\it topology}, which is the collection of faces, edges, and vertices and the logical relationships and constraints that connect them. This dual structure creates an unambiguous, data-rich solid that is mathematically precise \citep{faux1979computational}. 

Despite being so popular and widespread, there is a lack of publicly available B-rep data. In practically every industry, CAD data is considered critical intellectual property necessary for competitive advantage \citep{stjepandic2015intellectual}. Sharing such files can expose companies to significant additional risks, including physical sabotage, supply chain poisoning \citep{karahan2025unveiling}, and data integrity issues in distributed manufacturing \citep{li2025enhancing}.
Converting to non-parametric formats such as 3D meshes (e.g. STL) destroys the design intent and feature tree history, making it very difficult to reverse engineer the original design \citep{lecallard2021mesh}. In practice, CAD designers typically send manufacturers only the mesh as a safeguard from theft or modification of the core design, since the mesh only provides an approximation of the objects \citep{kantaros2024intellectual}.


Recent AI systems have shown profound success across various industries. AI typically depends on large, high-quality datasets, but CAD as a use case lacks comparable access to such data. Industrial CAD data is scarce, expensive to label, and often proprietary, making it ever more critical to study data-efficient methods for CAD. In other domains, self-supervised learning (SSL) has substantially improved label efficiency as shown by contrastive and masked-autoencoding approaches \citep{chen2020big,he2022masked,bahri2021scarf}. It turns out that B-reps provide a particularly rich structure for SSL when viewed as topological graphs because they expose intrinsic geometric and topological features arising from the signals—face adjacency, edge topology, curve and surface types, continuity, and geometric constraints—that can be learned without manual labels. Therefore, although B-rep data is limited and labeled B-rep data is even more limited, each B-rep nonetheless contains rich internal structure that can generate many high quality self-supervised training signals. Therefore, SSL is a natural way to improve both data and label efficiency.

Our contributions are as follows:

\begin{enumerate}
\item \textbf{Masked Topology Modeling (MTM).} An SSL objective that masks the induced face-adjacency graph edges and predicts each masked edge's convexity and curve type from post-message-passing face features.
To our knowledge this is the first use of \emph{masked graph-topology
prediction} as an SSL objective for B-rep and the labels are extracted for free by the geometry kernel.
\item \textbf{Theoretical Analysis.} We show identifiability results about MTM's ability to recover the edge's convexity and curve type, and a separability result that shows MTM forces the encoder to represent geometry that face-only approaches ignore.
\item \textbf{End-to-end SSL procedure.} We combine MTM with MoCo momentum-queue contrastive learning \citep{moco} and a breadth first search (BFS)-connected
face-region masked reconstruction objective and train with a $27.9$M-parameter encoder based on UV-grid \citep{jayaraman2021uv}, GATv2 \citep{brody2021attentive}, and Transformers \citep{vaswani2017attention}.
\item \textbf{Synthetic SSL B-rep dataset.} We adapt a procedural sketch-extrude generator \citep{cadrecode} to emit B-rep solids with topology supervision. We pretrain our method with ABC dataset \cite{abc} and the synthetic dataset. Ablations show the synthetic data during pretraining contributes significant gains on downstream real-world tasks.
\item \textbf{Strong empirical results} across four downstream benchmark datasets (F360, SolidLetters, MFInstSeg and CadSynth). We show SOTA performance on F360, SolidLetters, and MFInstSeg, and competitive performance on CadSynth with recent SOTA in-distribution methods in the low label budget setting.
\end{enumerate}

\section{Related Work}

{\bf B-rep encoders}. Research on B-rep has received a lot of recent attention due to its ability to effectively represent 3D models \citep{guo2022complexgen,jayaraman2022solidgen,willis2022joinable,cherenkova2024spelsnet,xu2025autobrep}. We begin by highlighting the early work in B-rep representation learning. UV-Net \citep{jayaraman2021uv} and BRepNet \citep{lambourne2021brepnet} are two foundational B-rep encoders. UV-Net represents each face as a UV-grid processed by a CNN and the face-graph by graph convolutions, producing per-face embeddings. BRepNet convolves directly over oriented coedges of the data structure. Both are supervised architectures, trained end-to-end for a downstream head. Relative to BRepNet, our work differs in the role of topology. BRepNet conditions message passing on it, whereas we hide part of it and make it a prediction target. We adopt the UV-Net architecture as our backbone but decouple representation learning from supervision.
 CADOps-Net \citep{dupont2022cadops} learns a representation via joint learning of CAD operation types and steps. Hierarchical CADNet \citep{colligan2022hierarchical} proposes a hierarchical B-Rep graph representation which encodes information about the surface geometry and face topology trained in a supervised way. AAGNet \citep{wu2024aagnet} is a multi-task graph network over face-adjacency graphs, and BRepMFR \citep{zhang2024brepmfr} uses domain adaptation.

{\bf Self-supervised learning for B-reps}. The UV-Net paper also proposes a self-supervised contrastive variant, which pretrains the encoder by pulling together augmented views of the same solid. We use contrastive learning as one component but add a structural, non-contrastive objective (MTM) on top. BRep-BERT \citep{lou2023brep} adapts masked language modeling by masking and reconstructing per-node (face) attributes. We differ in the masking unit and target: we mask edges and predict relational, kernel-computed properties (convexity and curve type) rather than node features. 
GC-CAD \citep{quan2024self} and ContrastCAD \citep{jung2024contrastcad} learn through contrastive alignment of augmented B-rep views. As with UV-Net's contrastive variant, the learning signal is view-invariant. MTM adds local topological reasoning that these objectives do not explicitly supervise. Most recently, BRepMAE \citep{yao2026brepmae} takes a generative masked-autoencoding route, reconstructing masked geometry. Ours is discriminative: classifying a small set of topological relations at hidden edges rather than regenerating a region. Masked B-rep Autoencoder \citep{li2026masked} masks and reconstructs node geometry and attributes through a decoder. MTM masks graph topology and recovers it as a discriminative edge-classification task (convexity and curve type), coupled with a global contrastive objective and no reconstruction decoder.

\section{Method}

\subsection{Viewing a B-rep as a Labeled Topological Graph}
\label{sec:brep-as-graph}

\begin{wrapfigure}{r}{0.5\linewidth}
  \centering
  \includegraphics[width=\linewidth]{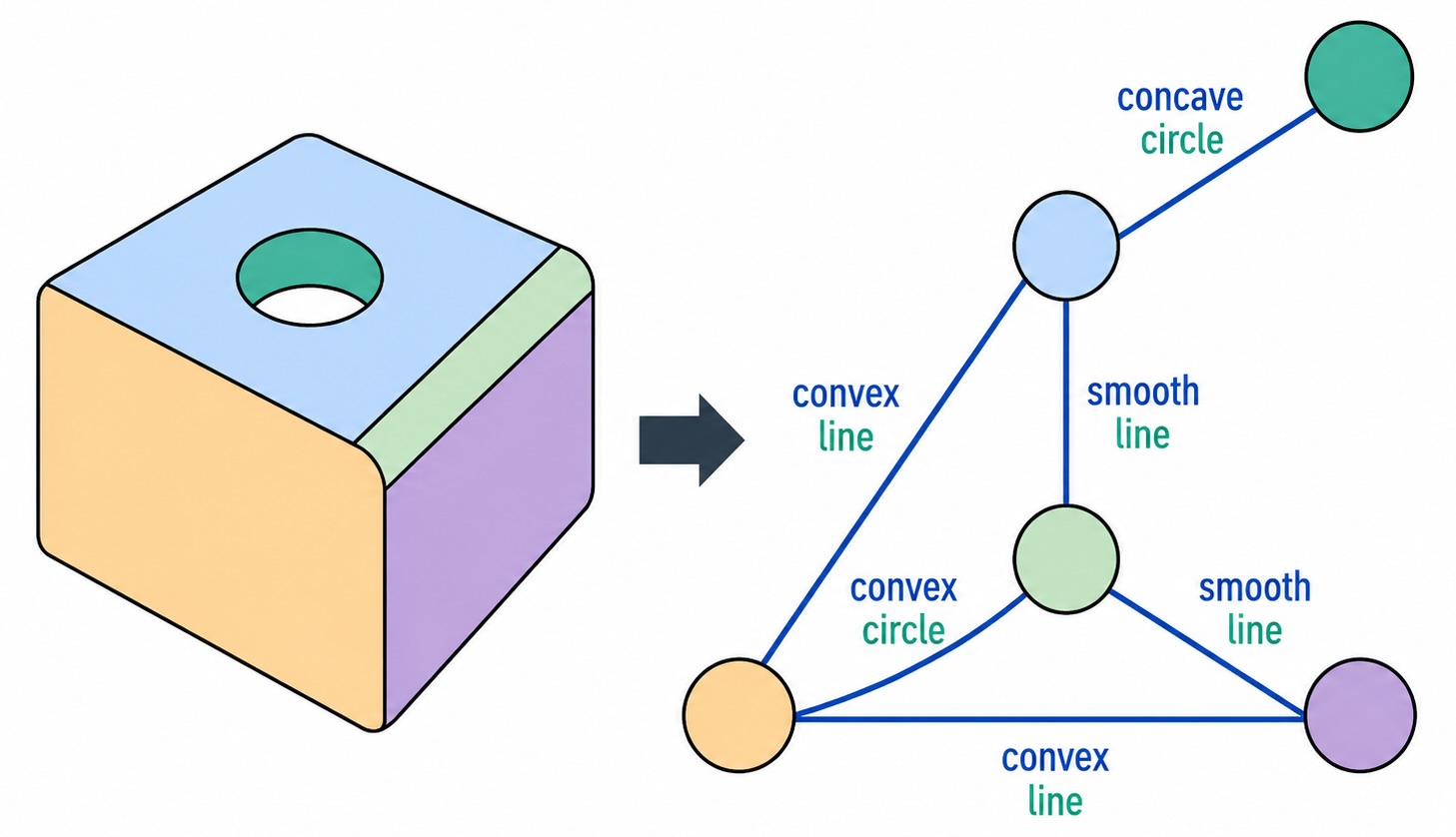}
  \caption{\textbf{A B-rep solid as an labeled face-adjacency graph.}
    Each face of the solid (left) becomes a node (right), and two
    nodes are joined by a graph edge iff their faces meet along a shared B-rep edge. Every graph edge carries the two labels MTM predicts: a convexity class
    (\emph{convex}, \emph{concave}, \emph{smooth}, \emph{knife}) and a curve type
    (\emph{line}, \emph{circle}, \emph{B-spline}, \emph{ellipse}, \emph{other}),
    both computed deterministically by the geometry kernel. Because this
    topological signal lives on the graph's edges, it can be masked and predicted directly.}
  \label{fig:brep-graph}
\end{wrapfigure}

Unlike a mesh or point cloud, the B-rep representation contains the \emph{topology} of the structure: a hierarchy of
faces, edges, and vertices together with their incidence relations (i.e. which edge
bounds which faces, which vertices terminate which edge, and orientation via
co-edges/half-edges). It's a combinatorial structure computed and guaranteed
consistent by the CAD kernel. 
MTM operates on a specific induced structure from the B-rep, the \emph{face-adjacency graph}.

The face-adjacency graph \citep{ansaldi1985geometric} was originally introduced to expose the topology of a solid model.
The nodes are faces and the edges are shared B-rep edges. 
We take each face of the solid to be a node. The node's input feature is the geometry the UV-Net encoder already extracts: a UV-grid sampling of the
trimmed surface (points, normals, and a trimming/visibility mask) passed through the surface CNN to a per-face embedding. A node is thus a patch of surface together with its local shape. Two faces $f_i$ and $f_j$ are connected by a graph edge $e_{ij}$ if and only if they share a B-rep edge, i.e. they meet along a common curve.  

We can further automatically label each graph edge with the following two topological attributes: {\it convexity} $c(e)\in\{\mathrm{concave},\mathrm{convex},
  \mathrm{smooth},\mathrm{knife}\}$, the dihedral relationship of the two
  faces across the seam. If the material folds inward, then it's {\it concave}; and if it's outwards, then it's {\it convex}. If they meet tangentially, that is at approximately $180$ degree angle (the boundary between concave and convex), then it's {\it smooth}. Finally, the degenerate case where it meets at approximately $0$ degree angle, then it's classified as {\it knife}. The second attribute is {\it curve type} $t(e)\in\{\mathrm{line},\mathrm{circle},
  \mathrm{B\text{-}spline},\mathrm{ellipse},\mathrm{other}\}$—the geometry of
  the seam curve itself.
Both are computed deterministically by the geometry kernel during preprocessing
(signed dihedral angle for convexity, the parametric curve tag for type—see
Appendix~\ref{app:preproc}), so these labels are \emph{free}. No human annotates
them. This is the reason topology can serve as a self-supervised signal. The
graph edges come pre-labelled by construction. Labeling the edges of the face-adjacency graph is an old idea used for process planning \citep{joshi1988graph}. However, it was a binary concave/convex label. Meanwhile, we significantly expand the labels for MTM, leading to higher quality self-supervised training signals.


\subsection{Contrastive Learning}

Our encoder follows UV-Net~\citep{jayaraman2021uv}: each face is sampled on a fixed $U\times V$ parametric grid and embedded by a surface CNN, each edge by a curve CNN, and the resulting per-face features are refined by message passing over the face-adjacency graph to yield embeddings $h_i\in\mathbb{R}^{384}$. We use this architecture unchanged except that we replace UV-Net's supervised training with our self-supervised objective. Full sampling, layer, and dimension details are in Appendix~\ref{app:backbone}.

\begin{figure}[t]
  \centering
  \includegraphics[width=0.8\linewidth]{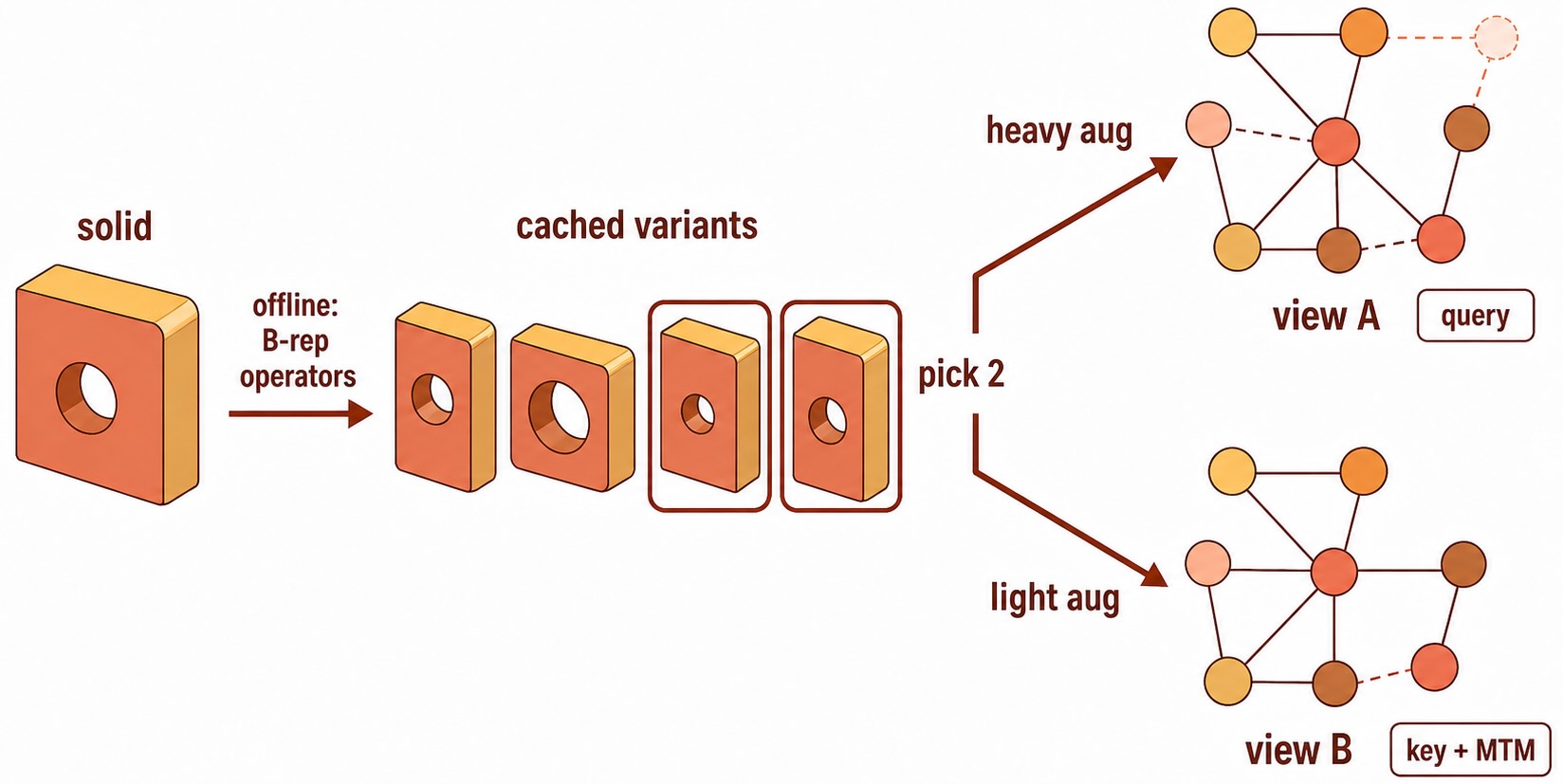}
  \caption{\textbf{Two-level positive-pair construction.} \emph{Offline}, each
  source solid is perturbed by B-rep operators into several cached variants that
  differ in geometry. \emph{Online}, a training pair draws two of these variants
  and augments them asymmetrically at the graph level. One receives heavy
  augmentation to form view~A (the contrastive query), the other light
  augmentation to form view~B (the contrastive key, and the graph on which MTM
  and region reconstruction operate). The pair therefore differs in both geometry
  (distinct perturbations) and graph structure (heavy vs. light corruption),
  while view~B stays lightly augmented so the auxiliary reconstruction targets
  remain well-posed.}
  \label{fig:augment}
\end{figure}

We adopt Momentum Contrast (MoCo) \citep{moco}, which casts the objective as a dictionary-lookup task. Each input is rendered into two augmented views. View A is passed through the online (query) encoder to produce a query, while View B is passed through a momentum key encoder to produce a key. The query is trained to match its corresponding key (the positive) against a large set of negatives drawn from a large queue of negatives (here $16{,}384$ negatives). This queue decouples the number of negatives from the batch size and is optimized with an InfoNCE loss \citep{oord2018representation}. Full details are in Appendix~\ref{app:contrastive}.

Positive pairs are formed at two levels. \emph{Offline}: for each source
solid we precompute several perturbed variants by applying a randomly chosen
B-rep operator through the OpenCASCADE kernel, each validated for a watertight solid. A training pair draws two different variants of the same source. \emph{Online}: each drawn view further undergoes face drop, edge drop, connected-subgraph sampling, and coordinate jitter. Augmentation is
asymmetric: view~A (the contrastive query) receives the full heavy
augmentation, while view B receives only light augmentation. View B serves both as the contrastive key and as the graph on which the auxiliary objectives (MTM and region reconstruction) operate. Keeping B lightly augmented keeps those
reconstruction targets well-posed, because both auxiliary objectives read
from the same view B encoding. 

{\bf Pretraining data.} Our backbone is pretrained entirely out-of-domain, from two sources. The first is real B-rep solids (in STEP format) from ABC \citep{abc}
(${\sim}1$M shapes), the largest public collection of CAD models. We apply a filtering and deduplication pass to discard degenerate or malformed solids and remove near-duplicate models. The second is a new procedurally generated dataset: we adapt the CAD-generation procedure of CAD-Recode \citep{cadrecode}, which samples random 2D profiles, places them on random planes, extrudes them into 3D primitives, and combines these into a single solid. We add a topology-preserving parameter-jitter operation that perturbs a solid's continuous parameters (extrusion depths, radii, positions) while holding its face-edge topology fixed, producing semantically equivalent variants of a shape. Together with our B-rep operator augmentations, this supplies the positive pairs for contrastive learning entirely from synthetic data (see Appendix~\ref{app:synth}).

Of the ABC dataset, for the convexity label, approximately $39\%$ of the labels were convex, $39\%$ concave, $22\%$ smooth, and around $0.1\%$ were knife. For the curve type label: $56\%$ line,  $25\%$ circle, $17\%$ B-spline, $2\%$ ellipse, and very few were labeled other, which is meant to capture degenerate cases.

\subsection{Masked Topology Modeling (MTM)}
\label{sec:mtm}

\begin{figure}[t]
  \centering
  \includegraphics[width=0.9\linewidth]{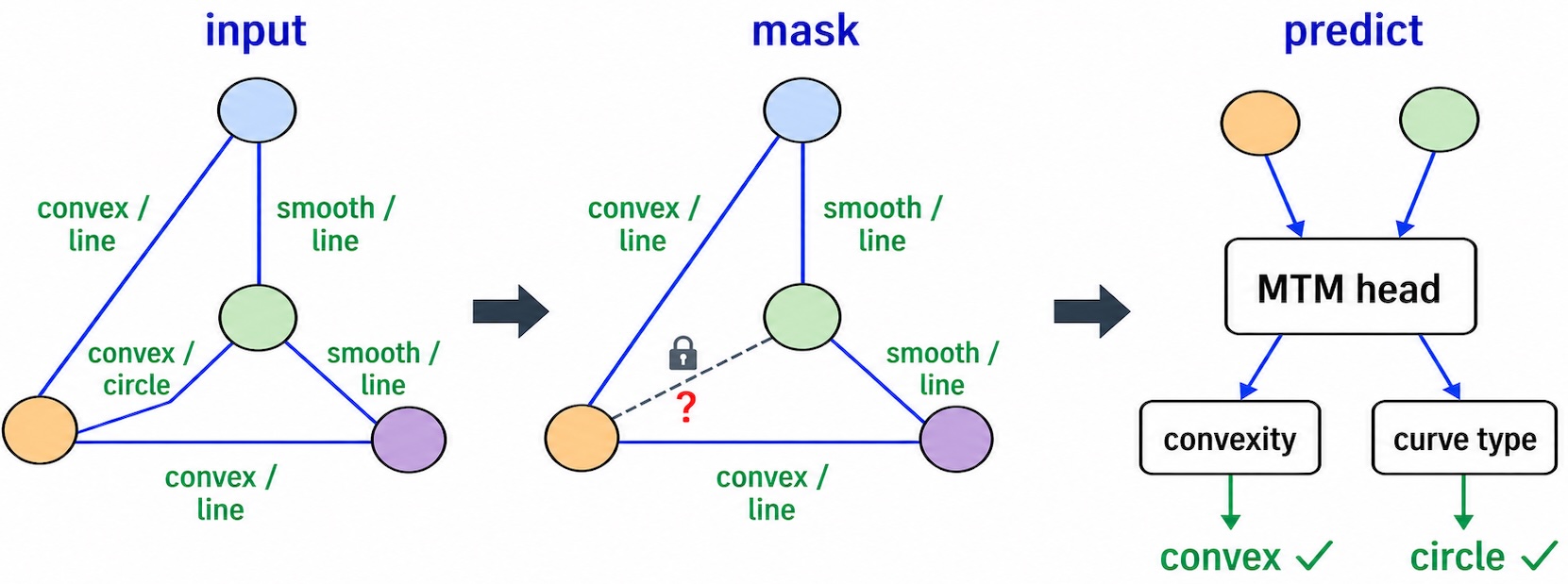}
  \caption{\textbf{Masked Topology Modeling (MTM).} Left: each edge of the
  face-adjacency graph carries kernel-computed labels --- a convexity class and a
  curve type. Middle: we hide one edge, deleting it and its labels from the graph
  before encoding, so no message crosses that seam. Right: from the two endpoint
  faces alone (each contributing only its own boundary evidence, never the
  removed edge) an MTM head recovers the hidden convexity and curve-type
  labels.}
  \label{fig:mtm}
\end{figure}

MTM is a \emph{masked prediction} objective on the face-graph's edge set. The key
design choice is that the encoder never sees the masked edges. They are
deleted from the graph before message passing so the head must reconstruct
each missing edge's semantics purely from the geometry of its two endpoint
faces, as propagated through the remaining topology. Given a shape $S{=}(F,E)$:

\begin{enumerate}
\item \textbf{Mask.} On view B, sample a masked edge set
$M\subset E$ at rate $p_{\text{mask}}{=}0.7$ and delete it from the edge index, so
the encoder receives only $(F,E\setminus M)$. We record each masked edge's two
endpoint faces and its ground-truth $(\text{convexity},\text{curve-type})$ labels.
\item \textbf{Forward.} The encoder $\phi$ runs message passing over the
\emph{masked} graph to produce per-face features $h_i$. Because the masked edges
are absent, $h_i$ aggregates only from a face's \emph{remaining} neighbors (i.e. the
head cannot see the edge it must predict).
\item \textbf{Predict.} For each masked edge $e{=}(i,j)$, the ordered endpoint
feature $[h_i;h_j]$ is passed to two lightweight MLP heads predicting convexity
($4$ classes) and curve type ($5$ classes).
\item \textbf{Loss.}
\begin{equation}
L_{\text{topo}} = \tfrac{1}{2}\!\left(
\mathrm{CE}(\mathrm{MLP}_{\text{convexity}}([h_i;h_j]),\,c(e)) +
\mathrm{CE}(\mathrm{MLP}_{\text{curve-type}}([h_i;h_j]),\,t(e))\right),
\end{equation}
averaged over masked edges, weight $\lambda_{\text{topo}}{=}0.5$ and CE denotes cross-entropy loss. Both labels are extracted deterministically and no labels are required. Details are in
Appendix~\ref{app:mtm-details}.
\end{enumerate}

The heads are lightweight (${\approx}0.1$M params, discarded after pretraining)
and add negligible compute. They operate on the light-augmentation view's face features that the region-reconstruction objective already compute. Note that the edge features are deliberately \emph{not} conditioned on the edge's labels, because those labels are exactly what MTM predicts. Supplying them as input would leak the target.


\subsection{Face-region masked reconstruction}
\label{sec:region}
Complementing MTM's edge-level objective, we add a BFS-connected face-region masked reconstruction objective on view B. We mask a single connected
blob of faces (replacing their UV-grids with a learnable mask token) and, from each masked face's feature, reconstruct its geometry (MSE) and its coarse
geometric type (cross-entropy), giving
\begin{equation}
L_{\text{region}}=\mathrm{MSE}(\text{geom})+\mathrm{CE}(\text{type}),
\qquad \lambda_{\text{region}}{=}1.0.
\end{equation}
Masking a \emph{connected} region (rather than scattered faces) forces the encoder
to infer a whole face from its surroundings.
Full specification in Appendix~\ref{app:region}.

\subsection{Combined objective}
\label{sec:combined-obj}
\begin{equation}
L = L_{\text{NCE}}(z_a, z_b) + \lambda_{\text{topo}}\, L_{\text{topo}}
    + \lambda_{\text{region}}\, L_{\text{region}}.
\end{equation}
The headline backbone is a $30$-epoch continuation from an ABC+synth-pretrained
checkpoint with $p_{\text{mask}}{=}0.7$, $\lambda_{\text{topo}}{=}0.5$, and region
masking ($\lambda_{\text{region}}{=}1.0$, region ratio $0.4$).

\section{Theory}
\label{sec:theory}

We present two results concerning the masked topology objective: one showing that its two prediction targets differ in how learnable they are (identifiability), and another showing that it compels the encoder to capture geometric information that face-level objectives are shown to miss entirely (separability).

We view a solid $S$ as a \emph{face-adjacency graph} $G(S)=(V,E)$. Each node is a face (a smooth surface patch) and each edge $e=\{i,j\}$ records that faces $i,j$ meet along a shared boundary curve $\sigma_e$. MTM supervises two per-edge targets: the \emph{convexity}
$c(e)\in\{\text{convex},\text{concave},\text{smooth},\text{knife}\}$, defined
relative to the material (interior) side; and the \emph{curve type}
$t(e)\in\{\text{line},\text{circle},\text{ellipse},\text{B-spline},\text{other}\}$. Let $\Phi$ map a graph to per-face embeddings $\Phi(G)_v\in\mathbb{R}^d$, and let $\mathcal{M}\subseteq E$ be a random per-shape edge subset whose connections, features, and labels are removed and held out as targets. Both directed half-edges of a masked edge are removed together, so no masked label survives in the input. Each $h_v=\Phi(G\setminus\mathcal{M})_v$ aggregates only surviving structure, and the head predicts $(c(e),t(e))$ from $(h_i,h_j)$. Write $H(Y\mid X)$ for the residual uncertainty about $Y$ given $X$ ($=0$ iff $X$ determines $Y$). All proofs are in Appendix~\ref{app:theory}.

The next result says the two labels MTM asks the encoder to predict have different ceilings. Convexity is always learnable to perfect accuracy, but curve type is not, because different curves can look identical at finite
sampling resolution. For instance, a spline can pass through the same sample points as a true circular arc while matching it nowhere else, making the two curve types impossible to tell apart from those samples alone. Writing $\mathcal{S}_e$ for the finite samples an encoder sees near $e$:

\begin{theorem}[Identifiability]
\label{thm:ordering}
Assume {(A1)} each face is $C^1$ (continuously differentiable) near $\sigma_e$; {(A2)} the faces meet
along exactly one curve; {(A3)} the encoder is finite-resolution---each
$h_v$ comes from finitely many position-and-normal samples resolving the normal near $\sigma_e$ (matching real encoders, which use a fixed UV grid per face and a fixed number of samples per curve); and {(A4)} with positive probability an edge carries a curve whose type is not determined by its samples $\mathcal{S}_e$ (e.g.\ a spline matching a conic at the sampled points). Then, the smallest achievable MTM loss is $0$ for convexity but at least $\mathbb{E}[H(t(e)\mid\mathcal{S}_e)]>0$ for curve type. This lower-bounds \emph{any} finite-resolution encoder. The convexity head is thus trainable to strictly lower loss than the type head, with residual type error concentrated on curves matching a simpler analytic form at the sampled resolution (near circular ellipses, large-radius arcs, splines that locally mimic conics).
\end{theorem}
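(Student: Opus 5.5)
The proof splits into two halves that share one fact: the Bayes-optimal cross-entropy loss for a target $Y$ from features $X$ equals the conditional entropy $H(Y\mid X)$, attained by the true posterior. Concretely, for any head $q$ we have $\mathbb{E}[-\log q(Y\mid X)] = H(Y\mid X) + \mathbb{E}\,\mathrm{KL}(p(\cdot\mid X)\,\|\,q(\cdot\mid X))\ge H(Y\mid X)$. The smallest achievable MTM loss for each head is therefore the infimum over encoders $\Phi$ of $H(\text{label}\mid h_i,h_j)$. We may assume the MLP heads are expressive enough to represent the posterior, or we read ``achievable'' as an infimum over heads. The claim then reduces to showing that this entropy can be driven to $0$ for convexity, and that it is bounded below by $\mathbb{E}[H(t(e)\mid\mathcal{S}_e)]$ for curve type.

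\emph{Convexity.} We exhibit a map $g$ with $c(e)=g(h_i,h_j)$. Under (A1) and (A2), each face has a well-defined outward normal field continuous up to the single seam $\sigma_e$. Under (A3), the samples near $\sigma_e$ resolve these normals, so $h_i$ can encode (injectively, since $d$ is arbitrary and the sample set is finite) the boundary positions and normals of face $i$. These samples belong to the face's own UV-grid, so deleting edge $e$ does not remove them, and they remain available after masking. From $(h_i,h_j)$ the head recovers, at a point $p\in\sigma_e$, the normals $n_i,n_j$ and an in-face tangent $u_i$ pointing away from the seam into face $i$. The sign of $\langle n_j,u_i\rangle$ then distinguishes convex from concave. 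The angle between $n_i$ and $n_j$ being $\approx 0$ gives smooth, and $n_i\approx-n_j$ gives knife. Since the kernel itself labels convexity by a signed dihedral angle computed from exactly these quantities, $c(e)$ is a deterministic function of $(h_i,h_j)$. Hence $H(c\mid h_i,h_j)=0$, and the optimal loss is $0$. One technical point must be handled: the ``approximately'' thresholds used by the kernel have to be applied to the same sampled normals, or else the kernel's thresholds must be computable from the encoded samples.

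\emph{Curve type.} We use the data-processing inequality. By (A3), $(h_i,h_j)$ is a function of finitely many samples, and its information about the seam geometry $\sigma_e$ passes only through $\mathcal{S}_e$. This gives the Markov chain $t(e)\to \sigma_e\to \mathcal{S}_e \to (h_i,h_j)$, conditionally on the rest of the shape, and therefore $H(t\mid h_i,h_j)\ge H(t\mid \mathcal{S}_e)$. This holds for every $\Phi$, hence also for the infimum. Averaging over edges yields the bound $\mathbb{E}[H(t(e)\mid\mathcal{S}_e)]$. Strict positivity follows from (A4): on an event of positive probability, two types (say spline versus circle) both have positive posterior mass given the same $\mathcal{S}_e$, so the conditional entropy there is positive, and so is its expectation. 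Comparing the two bounds gives the ordering, and the residual error sits exactly on the (A4) event, which consists of curves agreeing with a simpler conic at the sample points.

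\emph{Main obstacle.} The hardest step is to make the Markov chain honest. Other faces' samples reach $h_i$ through message passing, so I would define $\mathcal{S}_e$ to include every sample the encoder reads that carries information about $\sigma_e$. I would then argue that, given these samples, the remaining inputs are conditionally independent of $t(e)$. Strictly, this is part of what (A4) must assert: that ambiguity persists given all encoder inputs. For the convexity half, I would also need to verify that masking does not remove the boundary normal samples; they live in the face UV-grids rather than in the deleted edge features.
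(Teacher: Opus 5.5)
Your proposal is correct and takes essentially the same route as the paper: you use the fact that cross-entropy is minimized at $H(Y\mid X)$, you get the zero convexity floor from a lossless encoding of each face's samples together with convexity being a local function of the two outward normals, and you get the strictly positive type floor from the data-processing inequality plus (A4). The differences are minor: the paper characterizes convexity through the conormals $\hat t\times n_k$ rather than your sign of $\langle n_j,u_i\rangle$, and it avoids your message-passing concern by reading (A3) as making each $h_v$ a function of face $v$'s own samples, so that $(h_i,h_j)=g(\mathcal{S}_e)$ holds by assumption.
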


Write $\gamma(f)$ for the \emph{intrinsic geometry} of face $f$ (its surface patch up to rigid motion). Call a representation \emph{face-level} if its value is a function of $(\gamma,G)$ alone. That is, it sees each face's intrinsic shape and the face-adjacency graph stripped of all geometry information, and in particular never sees the \emph{relative orientation} of adjacent faces. Masked-face reconstruction and face-level contrastive learning are of this kind. MTM, supervising $c$, is not.

The next result shows the existence two different solids that look identical to any method judging faces in isolation (i.e. same face shapes, same adjacency graph) yet differ in how the faces fold together (one has a convex crease, the other concave). A face-level method will give them the same representation and never tell them apart. MTM will not, because to predict the opposite creases it must encode the fold. MTM is therefore strictly more discriminative.

\begin{theorem}[Separability]
\label{thm:separation}
There exist closed solids $S,S'$ and an adjacency-preserving bijection
$\phi\colon V\!\to\!V'$ ($G(S)\!\cong\!G(S')$) with $\gamma(f)=\gamma(\phi(f))$ for
every face, yet $c(e)\neq c(\phi(e))$ for some edge $e$. Then \textup{(i)} every
objective depending on $(\gamma,G)$ alone admits a global optimum with
$h_f(S)=h_{\phi(f)}(S')$ for all $f$, on which $S,S'$ are indistinguishable; while
\textup{(ii)} no $(\gamma,G)$-measurable map is a sufficient statistic for $c$, so
any representation attaining zero MTM loss must emit $c(e)\neq c(\phi(e))$ from the endpoint embeddings, giving at least one endpoint a distinct embedding and
separating $S,S'$. Hence MTM-optimal representations are strictly more
discriminative than any face-level one: MTM must separate a pair that every
face-level optimum is free to merge.
\end{theorem}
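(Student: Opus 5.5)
The plan has two parts: an explicit construction of $S,S'$, and a short information-theoretic argument for (i) and (ii).

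\textbf{Construction.} The construction must make every face congruent across the pair while flipping one crease from convex to concave. Flat pieces will not do this, since planar faces of a closed polyhedron determine its dihedral angles too rigidly. Instead I will use a solid of revolution built from a disk and a spherical cap.

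Take $S$ to be the lens formed by a planar disk $D$ of radius $r$ glued along its rim circle $\sigma$ to a spherical cap $C$ bulging outward. The material lies between $D$ and $C$, so the rim is a convex crease. Take $S'$ to be the same disk and cap, but with the cap bulging to the same side as the material above the disk. To keep $S'$ closed, cap it off with a third face, for instance a cylinder-and-disk closure. The same closure is added to $S$, with a matching outer arrangement, so that both graphs are isomorphic.

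A cleaner alternative is to use two thick shapes in which a single spherical cap face appears once as a dome and once as a dimple on a flat annular face. The dome rim is convex. The dimple rim is concave, because the material wraps around it. In the dimple case the cap is congruent as a surface patch up to rigid motion, and reflection is included if intrinsic geometry is taken up to isometry. The annular face and all other faces are literally identical, and the graph is unchanged. I will commit to this dome/dimple construction.

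The checks are then:
\begin{itemize}
\item \emph{(A1)--(A2)-style regularity:} each face is $C^1$ and each pair of faces meets along one circle.
\item \emph{Face-level agreement:} $\gamma(f)=\gamma(\phi(f))$ for every face, with $\phi$ the obvious bijection.
\item \emph{Opposite crease:} the signed dihedral angle at the rim has opposite sign in $S$ and $S'$, so $c(e)\neq c(\phi(e))$.
\end{itemize}

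\textbf{Part (i).} Any objective that depends on $(\gamma,G)$ alone takes identical inputs on $S$ and $S'$. Given any optimum $h$, I will define $h'$ by reusing on $S'$ the embeddings computed from $S$, transported through $\phi$. This is well defined because the inputs coincide, and the objective value is unchanged. Hence $h'$ is again a global optimum, and on it $h_f(S)=h_{\phi(f)}(S')$ for every face.

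\textbf{Part (ii).} Any $(\gamma,G)$-measurable map $T$ gives $T(S)=T(S')$ on the two endpoint faces of $e$, while the label $c$ differs. So $H(c\mid T)>0$ under any distribution that puts mass on both shapes, and $T$ is not sufficient for $c$. Zero MTM loss forces the head to be deterministic and correct on both edges, so $(h_i,h_j)\neq(h_{\phi(i)},h_{\phi(j)})$. By the pigeonhole step, at least one endpoint embedding differs, which separates $S$ and $S'$.

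The main obstacle is the construction, specifically making the "intrinsic geometry up to rigid motion" claim hold exactly. The dimple cap is the mirror image of the dome cap, so it is congruent only if improper motions are allowed. If the definition insists on proper motions, I will use the fact that a spherical cap is symmetric under a reflection through a plane containing its axis; composing that reflection with the mirroring yields a proper rigid motion. That symmetry settles the point.
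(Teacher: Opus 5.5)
Your arguments for (i) and (ii) are the paper's own. For (i) you symmetrize an optimum through $\phi$. For (ii) you note that any $(\gamma,G)$-measurable map agrees on the endpoints of $e$ and $\phi(e)$ while $c$ differs, so a zero-loss head forces some endpoint embedding to differ. The paper also states that masking $e$ in $S$ and $\phi(e)$ in $S'$ gives corresponding masked graphs; you should say this too, since the head only sees masked-graph embeddings.

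You diverge in the witness, and two of your claims there are wrong. First, flat faces do work. The paper glues a square pyramid with apex $(0,0,h)$ onto a slab for $S$, and carves out its reflection through $z=0$ for $S'$. The slant edge between two adjacent triangles is a convex hip in $S$ and a concave valley in $S'$, while all faces are congruent and the graphs are isomorphic. Cauchy-type rigidity needs convexity, which neither solid has. Second, your convexity labels are reversed. Let $\alpha\in(0,\pi)$ be the angle between the cap and the flat face at the rim. A dome fused onto a flat face meets it at material-side angle $\pi+\alpha$, which is concave, like the foot of a boss. A dimple's rim has angle $\pi-\alpha$, which is convex, like the lip of a hole. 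This is harmless, since only $c(e)\neq c(\phi(e))$ is used.

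The more substantive problem is that your cap congruence holds only if $\gamma$ forgets which side of the face is material. Any isometry carrying the dome cap onto the dimple cap must carry one sphere's center to the other's. But the dome's outward normals point away from its center, while the dimple's point toward it. So no rigid motion, proper or improper, matches the \emph{oriented} patches. Your fix via a reflection through a plane containing the axis only settles proper versus improper motions of the unoriented point set.

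This matters because the UV-grid inputs and the region-reconstruction targets both carry outward normals. A face-level model reading oriented patches would see the sign of the cap's curvature. In your pair that sign already separates $S$ from $S'$ and predicts $c$ on the rim, so the pair would not witness the separation. Planar faces avoid this: rotating $T_E$ about its base edge onto $T_E'$ is a proper motion that takes outward normal to outward normal. The paper's witness therefore survives even an oriented reading of $\gamma$. Under the literal unoriented definition your proof goes through, but you should either state that reading explicitly or switch to a planar witness.
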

\section{Experiments}

\begin{table}[t]
\centering
\setlength{\tabcolsep}{4pt}
\caption{Fusion~360 Gallery segmentation, full-data and few-shot (8-way
support, evaluated on the full test set). Our full-data number is the 3-seed
mean over downstream fine-tuning seeds. Reported as Top-1~/~mIoU (macro).}
\label{tab:f360}
\begin{tabular}{llcccccc}
\toprule
& & \multicolumn{2}{c}{Full-data} & \multicolumn{2}{c}{10-shot} & \multicolumn{2}{c}{20-shot} \\
\cmidrule(lr){3-4}\cmidrule(lr){5-6}\cmidrule(lr){7-8}
Method & Pretrain data & Top-1 & mIoU & Top-1 & mIoU & Top-1 & mIoU \\
\midrule
UV-Net & SolidLetters / ABC & 92.30 & 72.40 & -- & -- & -- & -- \\
BRepNet & F360 (supervised) & 94.30 & 81.40 & -- & -- & -- & -- \\
BRep-BERT & F360 (transductive) & 95.14 & 82.88 & 60.55 & 20.80 & 63.61 & 25.41 \\
From-scratch & None & -- & -- & 60.42 & 20.28 & 62.71 & 27.12 \\
\textbf{Ours} & ABC + synthetic & \textbf{95.6} & \textbf{84.55} & \textbf{89.65} & \textbf{66.58} & \textbf{91.13} & \textbf{68.92} \\
\bottomrule
\end{tabular}
\end{table}

\begin{table}[t]
\centering
\caption{SolidLetters, full-data 26-way accuracy and few-shot 10-way accuracy
(10-/20-shot). Our results show the mean $\pm$ standard deviation over 20 seeds (for few-shot, this includes the distinct 10-of-26 letter draws).}
\label{tab:sl}
\begin{tabular}{lccc}
\toprule
Method & Full-data acc & 10-shot & 20-shot \\
\midrule
Point-BERT & 93.25 & 60.29 & 67.35 \\
Point-MAE & 94.16 & 62.85 & 68.96 \\
Point-M2AE & 96.03 & 67.34 & 72.21 \\
UV-Net     & 97.24 & 54.31 & 61.47 \\
BRepNet    & 97.71 & 60.24 & 68.84 \\
BRep-BERT  & \textbf{98.54} & 68.71 & 75.92 \\
\textbf{Ours} & 98.02 $\pm$ 0.06 & \textbf{74.15 $\pm$ 3.37} & \textbf{83.70 $\pm$ 2.73} \\
\bottomrule
\end{tabular}
\end{table}

\subsection{Results}

{\bf Fusion~360 Gallery Segmentation}. We evaluate on Fusion~360 Gallery Segmentation \citep{fusion360} (8-class per-face) under the
fixed public split, in two regimes: full-data fine-tuning and few-shot (8-way
support). 
The closest non-temporal published competitor is BRep-BERT whose F360 representation
is \emph{transductive} (it sees test-set inputs, unlabeled, during pretraining).
Ours is inductive and out-of-domain. A from-scratch control (identical archecture as ours but finetuning from random initialization) isolates the contribution of pretraining in the few-shot rows.

On the full-data setting, our method achieves state-of-the-art macro mIoU among non-temporal methods, despite pretraining entirely out-of-domain. This suggests that our SSL method can generalize and outperforms benchmarks that have seen the F360 data in the pretraining. In the few-shot regime, the from-scratch control lands almost exactly on BRep-BERT's reported number, so pretraining contributes the bulk of the improvement. 

Note that we don't include recent baselines BRepMAE \citep{yao2026brepmae} or HierMae \citep{li2026masked} here. HierMae evaluates F360 on its own
$70/15/15$ split with a different few-shot definition, and BRepMAE reports no
comparable F360 cell. Neither releases its exact split (rather than using the public benchmark splits we adopt), so a controlled
comparison is infeasible. We instead anchor to BRep-BERT, whose protocol we reproduce exactly.

{\bf SolidLetters classification}. SolidLetters, introduced with UV-Net~\citep{jayaraman2021uv}, is a synthetic
B-rep classification benchmark whose 26 classes are the uppercase English letters. We use it in two regimes: (i) the standard
full-data setting, where a model is trained on the full label set and evaluated
26-way on the held-out split, and (ii) a few-shot setting that samples a
10-way episode (10 of the 26 letters) with $K\in\{10,20\}$ labeled solids per
class, measuring how well the frozen representation transfers under tight label budgets.

We compare against the numbers reported in \cite{lou2023brep}, which includes point cloud methods Point-BERT \citep{yu2022point}, Point-MAE \citep{pang2023masked}, and Point-M2AE~\citep{zhang2022point}).
We see that although our method performs slightly worse on full-data than BRep-BERT, it wins considerably on the few-shot setting. We don't compare HierMAE's results here, as its SolidLetters few-shot is $26$-way (all letters) whereas ours is $10$-way, so the numbers are not on the same scale. The data split is not released either, so we therefore compare against BRep-BERT, whose $10$-way protocol we reproduce.

\begin{table}[t]
\centering
\caption{MFInstSeg per-face segmentation accuracy (\%), mean $\pm$ standard deviation across $5$ seeds. Every row is trained and evaluated by us
under an identical protocol.}
\label{tab:mfi}
\begin{tabular}{lcccc}
\toprule
Method  & 0.1\% & 0.5\% & 1\% & Full Dataset \\
\midrule
GAT                 &  $38.67{\pm}0.43$ & $55.50{\pm}1.17$ & $64.29{\pm}1.72$ & $86.74{\pm}0.08$ \\
GraphSAGE           &  $49.02{\pm}0.29$ & $68.09{\pm}0.99$ & $83.83{\pm}0.40$ & $98.26{\pm}0.05$ \\
UV-Net              & $40.06{\pm}5.41$ & $76.54{\pm}0.74$ & $84.97{\pm}0.59$ & $98.77{\pm}0.06$ \\
AAGNet   & $58.63{\pm}1.05$ & $80.20{\pm}2.62$ & $91.40{\pm}0.53$ & $99.30{\pm}0.00$ \\
From-scratch  & $41.26{\pm}1.84$ & $51.04{\pm}0.28$ & $72.10{\pm}1.01$ & $98.54{\pm}0.00$ \\
\textbf{Ours}       
  & {$\mathbf{86.86{\pm}0.95}$}
  & {$\mathbf{93.83{\pm}0.23}$}
  & {$\mathbf{95.95{\pm}0.12}$}
  & {$\mathbf{99.53{\pm}0.01}$} \\
\bottomrule
\end{tabular}
\end{table}

\begin{table}[t]
\centering
\caption{CADSynth label-ratio accuracy (\%). Rows are every method in
HierMAE's CADSynth comparison (their Table~1).}
\label{tab:cadsynth}
\begin{tabular}{lccc}
\toprule
Method & 0.1\% & 0.5\% & 1\% \\
\midrule
AAGNet                   & 58.53 & 87.36 & 95.58 \\
BRepFormer             & 59.34 & 65.60 & 97.49 \\
\textbf{Ours}      & \textbf{93.36 $\pm$ 0.88} & \textbf{97.63 $\pm$ 0.15} & \textbf{98.50 $\pm$ 0.02} \\
\midrule
\multicolumn{4}{l}{\emph{Pretrained on CADSynth (in-distribution)}}\\
BRepMAE            & 93.63 & 96.84 & 98.36 \\
HierMAE           & \underline{93.82} & \underline{98.54} & \underline{98.91} \\
\bottomrule
\end{tabular}
\end{table}

\begin{figure}[t]
\centering
\begin{tikzpicture}
\begin{axis}[
    width=0.9\linewidth, height=4cm,
    ybar=1pt, bar width=7pt,
    ymin=0.6, ymax=0.98,
    ytick={0.6,0.7,0.8,0.9},
    ymajorgrids, grid style={gray!25},
    symbolic x coords={top-1, mIoU$_m$, mIoU$_w$, F1$_m$},
    xtick=data, tick label style={font=\footnotesize},
    enlarge x limits=0.2,
    legend style={font=\footnotesize, at={(0.5,-0.28)}, anchor=north,
              legend columns=4, /tikz/every even column/.append style={column sep=5pt}},
legend image code/.code={\draw[#1] (0cm,-0.08cm) rectangle (0.25cm,0.16cm);},
]
\addplot[fill=teal!25, draw=teal!55] coordinates {
    (top-1,0.8791) (mIoU$_m$,0.6817) (mIoU$_w$,0.7878) (F1$_m$,0.8045)};
\addplot[fill=cyan!75!blue, draw=blue!80] coordinates {
    (top-1,0.9036) (mIoU$_m$,0.7276) (mIoU$_w$,0.8268) (F1$_m$,0.8350)};
\addplot[fill=orange!30, draw=orange!65] coordinates {
    (top-1,0.8898) (mIoU$_m$,0.7141) (mIoU$_w$,0.8050) (F1$_m$,0.8274)};
\addplot[fill=red!70!orange, draw=red!85] coordinates {
    (top-1,0.9225) (mIoU$_m$,0.7662) (mIoU$_w$,0.8590) (F1$_m$,0.8612)};
\legend{9.4M off, 9.4M on, 27.9M off, 27.9M on}
\end{axis}
\end{tikzpicture}
\caption{{\bf MTM On vs. Off}: We plot results on frozen F360 linear probe metrics for both 9.4M and 27.9M models with MTM on vs off. We see that MTM on improves metrics across the board.}
\label{fig:mtm}
\end{figure}

  \begin{table}[t]
  \centering
  \caption{Synthetic-data ablation with a compute-matched control. We show results on Frozen F360 per-face probe. Only corpus and epoch budget differ.}
  \label{tab:synth}
  \begin{tabular}{llcccc}
  \toprule
  Dataset & Steps & Top-1 & Macro mIoU & Weighted mIoU & Macro F1 \\
  \midrule
  ABC-only    & 1.2k  & 0.8399 & 0.6308 & 0.7275 & 0.7626 \\
  ABC-only (compute-matched)  & 16.2k & 0.8471 & 0.6465 & 0.7388 & 0.7737 \\
  ABC$+$synthetic &  16.2k & \textbf{0.8990} & \textbf{0.7219} & \textbf{0.8206} & \textbf{0.8307} \\
  \bottomrule
  \end{tabular}
  \end{table}

{\bf MFInstSeg} \citep{wu2024aagnet} is a B-rep dataset where every face is labeled with one of $25$ machining-feature classes, and the task is per-face segmentation scored by face-level accuracy.
Table~\ref{tab:mfi} compares six models under the official $70/15/15$ split, across $0.1\%/0.5\%/1\%/100\%$ label subsets. Five models are train from scratch: three generic architectures (GAT, GraphSAGE, UV-Net), the task-specific supervised AAGNet (gAAG-GNN), and our own backbone at random init. Our method turns out to be the strongest across the board. We note that our re-implementation of AAGNet reaches $99.30$. This exceeds the $99.15$ per-face accuracy reported in the original paper, which doesn't report the low label budget setting. Our margin is highest as label budget shrinks which suggests that our SSL method is learning useful representation and is label efficient. We note that both BRepMAE and HierMAE are not compared here because they use a different $80/10/10$ split with a different label-subset construction. Neither released the splits.

{\bf CADSynth} \citep{zhang2024brepmfr} is a synthetic B-rep benchmark with dense per-face machining-feature annotations across 25 classes, shipped with an official 80/10/10 train/val/test split. Similar to MFInstSeg, BRepMAE pre-trains and finetune on CADSynth, and HierMae uses CADSynth in its pretraining corpus. Table~\ref{tab:cadsynth} repeats the MFInstSeg protocol on CADSynth. At the extreme $0.1\%$ budget our out-of-distribution model reaches competitive performance, with our best seed ($94.64$) exceeding both, despite never seeing CADSynth before fine-tuning.

The gap to the supervised methods that likewise never pretrain on CADSynth is stark compared to AAGNet and BRepFormer collapse to. As label budget grows, we still clear BRepMAE
but trail  HierMAE. OOD pretraining is
competitive with in-distribution SOTA at the smallest budget, but in-distribution methods pull ahead once labels are less
scarce.

\subsection{Ablations}
\label{sec:ablation}

{\bf Effect of MTM.}
We isolate the effect of MTM using both our $27.9$M architecture and a smaller $9.4$M parameter version of our architecture, training each from
scratch for $30$ epochs with MoCo and region masking enabled in both arms, varying only $\lambda_{\text{topo}}$ ($0.5$ vs.\ $0.0$). On a frozen F360 linear probe (Figure~\ref{fig:mtm}). Since both carry the fully contrastive recipe, this measures MTM's contribution. The frozen probe isolates representation quality. MTM produces significantly better results.

{\bf Synthetic pretraining data}
The MTM ablations in Table~\ref{tab:synth} isolate the effect of adding our procedurally generated dataset. We continue pretraining from a common ABC-only checkpoint (that was pretrained for $45$ epochs from scratch) for $10$ epochs and change only the data source: ABC$+$synthetic vs.\ ABC-only. We hold the architecture, objective, epoch budget, base model, and all hyperparameters fixed. In addition, to adjust for possible effects of more steps because the synthetic dataset is larger, we also run a compute-matched baseline for the ABC-only experiment to match the number of steps as the ABC$+$synth to control for whether the result was merely from more gradient steps. Moreover, the compute-matched control sees a more diverse set of perturbations of the ABC dataset so it's not merely iterating through the same ABC data multiple times. We find that adjusting for the compute is only a marginal gain, while adding our synthetic dataset significantly improves the results.

\section{Conclusion}
We introduced Masked Topology Modeling, an SSL objective that predicts the
convexity and curve type of masked B-rep edges from geometry. 
MTM comes with theoretical guarantees about its ability to recover the edge's convexity and curve type, and forces the encoder to represent geometry that face-only approaches cannot. We combined MTM with MoCo momentum-queue contrastive learning and a face-region masked reconstruction objective, pretrained on ABC and our procedurally generated data, and demonstrated strong empirical performance across four downstream benchmark datasets.

Future work can explore further ways to enrich the MTM objective. For example, performing regression on the angle instead of classifying the angle into $4$ convexity classes; labeling the edge by continuity class (G0: sharp, G1: tangent, G2: curvature-continuous), since this is what fillets and blends actually encode, in order to refine the current single {\it smooth} bucket into finer distinctions \citep{barsky1989geometric}; and predicting whether an edge even exists. Our method is currently an OOD method that matches in-distribution SOTA for the low label regime on CadSynth. Further pretraining experiments including downstream datasets could be a way to improve the results. Moreover, in general, scaling to more data and larger number of parameters to test the capabilities of our approach can be a future direction. Other ways to create synthetic data could also help bridge the gap of too little real-world B-rep available for large-scale training.

\bibliography{references}

\begin{thebibliography}{44}
\providecommand{\natexlab}[1]{#1}
\providecommand{\url}[1]{\texttt{#1}}
\expandafter\ifx\csname urlstyle\endcsname\relax
  \providecommand{\doi}[1]{doi: #1}\else
  \providecommand{\doi}{doi: \begingroup \urlstyle{rm}\Url}\fi

\bibitem[Ansaldi et~al.(1985)Ansaldi, De~Floriani, and Falcidieno]{ansaldi1985geometric}
Silvia Ansaldi, Leila De~Floriani, and Bianca Falcidieno.
\newblock Geometric modeling of solid objects by using a face adjacency graph representation.
\newblock \emph{ACM SIGGRAPH Computer Graphics}, 19\penalty0 (3):\penalty0 131--139, 1985.

\bibitem[Bahri et~al.(2022)Bahri, Jiang, Tay, and Metzler]{bahri2021scarf}
Dara Bahri, Heinrich Jiang, Yi~Tay, and Donald Metzler.
\newblock Scarf: Self-supervised contrastive learning using random feature corruption.
\newblock \emph{ICLR}, 2022.

\bibitem[Barsky \& DeRose(1989)Barsky and DeRose]{barsky1989geometric}
Brian~A Barsky and Tony~D DeRose.
\newblock Geometric continuity of parametric curves: three equivalent characterizations.
\newblock \emph{IEEE Computer Graphics and Applications}, 9\penalty0 (6):\penalty0 60--69, 1989.

\bibitem[Bernardini et~al.(1999)Bernardini, Bajaj, Chen, and Schikore]{bernardini1999automatic}
Fausto Bernardini, Chandrajit~L Bajaj, Jindong Chen, and Daniel~R Schikore.
\newblock Automatic reconstruction of 3d cad models from digital scans.
\newblock \emph{International Journal of Computational Geometry \& Applications}, 9\penalty0 (04n05):\penalty0 327--369, 1999.

\bibitem[Brody et~al.(2021)Brody, Alon, and Yahav]{brody2021attentive}
Shaked Brody, Uri Alon, and Eran Yahav.
\newblock How attentive are graph attention networks?
\newblock \emph{arXiv preprint arXiv:2105.14491}, 2021.

\bibitem[Chen et~al.(2020)Chen, Kornblith, Swersky, Norouzi, and Hinton]{chen2020big}
Ting Chen, Simon Kornblith, Kevin Swersky, Mohammad Norouzi, and Geoffrey~E Hinton.
\newblock Big self-supervised models are strong semi-supervised learners.
\newblock \emph{Advances in neural information processing systems}, 33:\penalty0 22243--22255, 2020.

\bibitem[Cherenkova et~al.(2024)Cherenkova, Dupont, Kacem, Gusev, and Aouada]{cherenkova2024spelsnet}
Kseniya Cherenkova, Elona Dupont, Anis Kacem, Gleb Gusev, and Djamila Aouada.
\newblock Spelsnet: Surface primitive elements segmentation by b-rep graph structure supervision.
\newblock \emph{Advances in Neural Information Processing Systems}, 37:\penalty0 1251--1269, 2024.

\bibitem[Colligan et~al.(2022)Colligan, Robinson, Nolan, Hua, and Cao]{colligan2022hierarchical}
Andrew~R Colligan, Trevor~T Robinson, Declan~C Nolan, Yang Hua, and Weijuan Cao.
\newblock Hierarchical cadnet: Learning from b-reps for machining feature recognition.
\newblock \emph{Computer-Aided Design}, 147:\penalty0 103226, 2022.

\bibitem[Dupont et~al.(2022)Dupont, Cherenkova, Kacem, Ali, Arzhannikov, Gusev, and Aouada]{dupont2022cadops}
Elona Dupont, Kseniya Cherenkova, Anis Kacem, Sk~Aziz Ali, Ilya Arzhannikov, Gleb Gusev, and Djamila Aouada.
\newblock Cadops-net: Jointly learning cad operation types and steps from boundary-representations.
\newblock In \emph{2022 International Conference on 3D Vision (3DV)}, pp.\  114--123. IEEE, 2022.

\bibitem[Faux \& Pratt(1979)Faux and Pratt]{faux1979computational}
Ivor~D Faux and Michael~J Pratt.
\newblock \emph{Computational geometry for design and manufacture}.
\newblock Halsted Press, 1979.

\bibitem[Fujita et~al.(2008)Fujita, Uchiyama, Nakagawa, Fukuoka, Hatanaka, Hara, Lee, Hayashi, Ikedo, Gao, et~al.]{fujita2008computer}
Hiroshi Fujita, Yoshikazu Uchiyama, Toshiaki Nakagawa, Daisuke Fukuoka, Yuji Hatanaka, Takeshi Hara, Gobert~N Lee, Yoshinori Hayashi, Yuji Ikedo, Xin Gao, et~al.
\newblock Computer-aided diagnosis: The emerging of three cad systems induced by japanese health care needs.
\newblock \emph{Computer methods and programs in biomedicine}, 92\penalty0 (3):\penalty0 238--248, 2008.

\bibitem[Groover \& Zimmers(1983)Groover and Zimmers]{groover1983cad}
Mikell Groover and EWJR Zimmers.
\newblock \emph{CAD/CAM: computer-aided design and manufacturing}.
\newblock Pearson Education, 1983.

\bibitem[Guo et~al.(2022)Guo, Liu, Pan, Liu, Tong, and Guo]{guo2022complexgen}
Haoxiang Guo, Shilin Liu, Hao Pan, Yang Liu, Xin Tong, and Baining Guo.
\newblock Complexgen: Cad reconstruction by b-rep chain complex generation.
\newblock \emph{ACM Transactions on Graphics (TOG)}, 41\penalty0 (4):\penalty0 1--18, 2022.

\bibitem[He et~al.(2020)He, Fan, Wu, Xie, and Girshick]{moco}
Kaiming He, Haoqi Fan, Yuxin Wu, Saining Xie, and Ross Girshick.
\newblock Momentum contrast for unsupervised visual representation learning.
\newblock In \emph{CVPR}, 2020.

\bibitem[He et~al.(2022)He, Chen, Xie, Li, Doll{\'a}r, and Girshick]{he2022masked}
Kaiming He, Xinlei Chen, Saining Xie, Yanghao Li, Piotr Doll{\'a}r, and Ross Girshick.
\newblock Masked autoencoders are scalable vision learners.
\newblock In \emph{Proceedings of the IEEE/CVF conference on computer vision and pattern recognition}, pp.\  16000--16009, 2022.

\bibitem[Heesom \& Mahdjoubi(2004)Heesom and Mahdjoubi]{heesom2004trends}
David Heesom and Lamine Mahdjoubi.
\newblock Trends of 4d cad applications for construction planning.
\newblock \emph{Construction management and economics}, 22\penalty0 (2):\penalty0 171--182, 2004.

\bibitem[Jayaraman et~al.(2021)Jayaraman, Sanghi, Lambourne, Willis, Davies, Shayani, and Morris]{jayaraman2021uv}
Pradeep~Kumar Jayaraman, Aditya Sanghi, Joseph~G Lambourne, Karl~DD Willis, Thomas Davies, Hooman Shayani, and Nigel Morris.
\newblock Uv-net: Learning from boundary representations.
\newblock In \emph{Proceedings of the IEEE/CVF conference on computer vision and pattern recognition}, pp.\  11703--11712, 2021.

\bibitem[Jayaraman et~al.(2022)Jayaraman, Lambourne, Desai, Willis, Sanghi, and Morris]{jayaraman2022solidgen}
Pradeep~Kumar Jayaraman, Joseph~G Lambourne, Nishkrit Desai, Karl~DD Willis, Aditya Sanghi, and Nigel~JW Morris.
\newblock Solidgen: An autoregressive model for direct b-rep synthesis.
\newblock \emph{arXiv preprint arXiv:2203.13944}, 2022.

\bibitem[Joshi \& Chang(1988)Joshi and Chang]{joshi1988graph}
Sanjay Joshi and Tien-Chien Chang.
\newblock Graph-based heuristics for recognition of machined features from a 3d solid model.
\newblock \emph{Computer-aided design}, 20\penalty0 (2):\penalty0 58--66, 1988.

\bibitem[Jung et~al.(2024)Jung, Kim, and Kim]{jung2024contrastcad}
Minseop Jung, Minseong Kim, and Jibum Kim.
\newblock Contrastcad: Contrastive learning-based representation learning for computer-aided design models.
\newblock \emph{arXiv preprint arXiv:2404.01645}, 2024.

\bibitem[Kantaros(2024)]{kantaros2024intellectual}
Antreas Kantaros.
\newblock Intellectual property challenges in the age of 3d printing: Navigating the digital copycat dilemma.
\newblock \emph{Applied Sciences}, 14\penalty0 (23):\penalty0 11448, 2024.

\bibitem[Karahan et~al.(2025)Karahan, Velio{\u{g}}lu, and Arslan]{karahan2025unveiling}
{\c{C}}etin Karahan, Hakan Velio{\u{g}}lu, and Yenal Arslan.
\newblock Unveiling the shadows: Anticipating future cyber risks and their impacts on businesses.
\newblock In \emph{Futurisks: Risk management in the digital age}, pp.\  17--55. Springer, 2025.

\bibitem[Koch et~al.(2019)Koch, Matveev, Jiang, Williams, Artemov, Burnaev, Alexa, Zorin, and Panozzo]{abc}
Sebastian Koch, Albert Matveev, Zhongshi Jiang, Francis Williams, Alexey Artemov, Evgeny Burnaev, Marc Alexa, Denis Zorin, and Daniele Panozzo.
\newblock {ABC}: A big {CAD} model dataset for geometric deep learning.
\newblock \emph{CVPR}, 2019.

\bibitem[Lambourne et~al.(2021)Lambourne, Willis, Jayaraman, Sanghi, Meltzer, and Shayani]{lambourne2021brepnet}
Joseph~G Lambourne, Karl~DD Willis, Pradeep~Kumar Jayaraman, Aditya Sanghi, Peter Meltzer, and Hooman Shayani.
\newblock Brepnet: A topological message passing system for solid models.
\newblock In \emph{Proceedings of the IEEE/CVF conference on computer vision and pattern recognition}, pp.\  12773--12782, 2021.

\bibitem[Lecallard et~al.(2021)Lecallard, Robinson, and Marques]{lecallard2021mesh}
Benoit Lecallard, Trevor~T Robinson, and Simao~P Marques.
\newblock Mesh and geometry manipulations for optimization and inverse design.
\newblock In \emph{AIAA Scitech 2021 Forum}, pp.\  1901, 2021.

\bibitem[Li et~al.(2025)Li, Yu, Li, Liu, and Yang]{li2025enhancing}
Chengnan Li, Tong Yu, Wangyan Li, Ying Liu, and Hongjun Yang.
\newblock Enhancing cad data integrity and security in supply chain networks using blockchain.
\newblock \emph{International Journal of Information Systems and Supply Chain Management (IJISSCM)}, 18\penalty0 (1):\penalty0 1--22, 2025.

\bibitem[Li et~al.(2026)Li, Wu, Wu, and Fu]{li2026masked}
Yifei Li, Kang Wu, Wenming Wu, and Xiao-Ming Fu.
\newblock Masked brep autoencoder via hierarchical graph transformer.
\newblock \emph{arXiv preprint arXiv:2603.14927}, 2026.

\bibitem[Liu(2021)]{liu2021fast}
Fei Liu.
\newblock Fast industrial product design method and its application based on 3d cad system.
\newblock \emph{Computer-Aided Design \& Applications}, 18, 2021.

\bibitem[Lou et~al.(2023)Lou, Li, Chen, and Zhou]{lou2023brep}
Yunzhong Lou, Xueyang Li, Haotian Chen, and Xiangdong Zhou.
\newblock Brep-bert: Pre-training boundary representation bert with sub-graph node contrastive learning.
\newblock In \emph{Proceedings of the 32nd ACM International Conference on Information and Knowledge Management}, pp.\  1657--1666, 2023.

\bibitem[Oord et~al.(2018)Oord, Li, and Vinyals]{oord2018representation}
Aaron van~den Oord, Yazhe Li, and Oriol Vinyals.
\newblock Representation learning with contrastive predictive coding.
\newblock \emph{arXiv preprint arXiv:1807.03748}, 2018.

\bibitem[Pang et~al.(2023)Pang, Tay, Yuan, and Chen]{pang2023masked}
Yatian Pang, Eng Hock~Francis Tay, Li~Yuan, and Zhenghua Chen.
\newblock Masked autoencoders for 3d point cloud self-supervised learning.
\newblock \emph{World Scientific Annual Review of Artificial Intelligence}, 1:\penalty0 2440001, 2023.

\bibitem[Quan et~al.(2024)Quan, Zhao, Yi, and Chen]{quan2024self}
Yuhan Quan, Huan Zhao, Jinfeng Yi, and Yuqiang Chen.
\newblock Self-supervised graph neural network for mechanical cad retrieval.
\newblock \emph{arXiv preprint arXiv:2406.08863}, 2024.

\bibitem[Rukhovich et~al.(2025)Rukhovich, Dupont, Mallis, Cherenkova, Kacem, and Aouada]{cadrecode}
Danila Rukhovich, Elona Dupont, Dimitrios Mallis, Kseniya Cherenkova, Anis Kacem, and Djamila Aouada.
\newblock Cad-recode: Reverse engineering cad code from point clouds.
\newblock In \emph{Proceedings of the IEEE/CVF International Conference on Computer Vision}, pp.\  9801--9811, 2025.

\bibitem[Stjepandi{\'c} et~al.(2015)Stjepandi{\'c}, Liese, and Trappey]{stjepandic2015intellectual}
Josip Stjepandi{\'c}, Harald Liese, and Amy~JC Trappey.
\newblock Intellectual property protection.
\newblock In \emph{Concurrent engineering in the 21st century: Foundations, developments and challenges}, pp.\  521--551. Springer, 2015.

\bibitem[Szalapaj(2013)]{szalapaj2013cad}
Peter Szalapaj.
\newblock \emph{CAD principles for architectural design}.
\newblock Routledge, 2013.

\bibitem[Vaswani et~al.(2017)Vaswani, Shazeer, Parmar, Uszkoreit, Jones, Gomez, Kaiser, and Polosukhin]{vaswani2017attention}
Ashish Vaswani, Noam Shazeer, Niki Parmar, Jakob Uszkoreit, Llion Jones, Aidan~N Gomez, {\L}ukasz Kaiser, and Illia Polosukhin.
\newblock Attention is all you need.
\newblock \emph{Advances in neural information processing systems}, 30, 2017.

\bibitem[Willis et~al.(2021)Willis, Pu, Luo, Chu, Du, Lambourne, Solar-Lezama, and Matusik]{fusion360}
Karl~D.D. Willis, Yewen Pu, Jieliang Luo, Hang Chu, Tao Du, Joseph~G. Lambourne, Armando Solar-Lezama, and Wojciech Matusik.
\newblock Fusion 360 gallery: A dataset and environment for programmatic {CAD} construction.
\newblock In \emph{SIGGRAPH}, 2021.

\bibitem[Willis et~al.(2022)Willis, Jayaraman, Chu, Tian, Li, Grandi, Sanghi, Tran, Lambourne, Solar-Lezama, et~al.]{willis2022joinable}
Karl~DD Willis, Pradeep~Kumar Jayaraman, Hang Chu, Yunsheng Tian, Yifei Li, Daniele Grandi, Aditya Sanghi, Linh Tran, Joseph~G Lambourne, Armando Solar-Lezama, et~al.
\newblock Joinable: Learning bottom-up assembly of parametric cad joints.
\newblock In \emph{Proceedings of the IEEE/CVF conference on computer vision and pattern recognition}, pp.\  15849--15860, 2022.

\bibitem[Wu et~al.(2024)Wu, Lei, Peng, and Gao]{wu2024aagnet}
Hongjin Wu, Ruoshan Lei, Yibing Peng, and Liang Gao.
\newblock Aagnet: A graph neural network towards multi-task machining feature recognition.
\newblock \emph{Robotics and Computer-Integrated Manufacturing}, 86:\penalty0 102661, 2024.

\bibitem[Xu et~al.(2025)Xu, Jayaraman, Lambourne, Liu, Malpure, and Meltzer]{xu2025autobrep}
Xiang Xu, Pradeep Jayaraman, Joseph Lambourne, Yilin Liu, Durvesh Malpure, and Pete Meltzer.
\newblock Autobrep: Autoregressive b-rep generation with unified topology and geometry.
\newblock In \emph{Proceedings of the SIGGRAPH Asia 2025 Conference Papers}, pp.\  1--12, 2025.

\bibitem[Yao et~al.(2026)Yao, Wu, Zheng, Xing, and Fu]{yao2026brepmae}
Can Yao, Kang Wu, Zuheng Zheng, Siyuan Xing, and Xiao-Ming Fu.
\newblock Brepmae: Self-supervised masked brep autoencoders for machining feature recognition.
\newblock \emph{arXiv preprint arXiv:2602.22701}, 2026.

\bibitem[Yu et~al.(2022)Yu, Tang, Rao, Huang, Zhou, and Lu]{yu2022point}
Xumin Yu, Lulu Tang, Yongming Rao, Tiejun Huang, Jie Zhou, and Jiwen Lu.
\newblock Point-bert: Pre-training 3d point cloud transformers with masked point modeling.
\newblock In \emph{Proceedings of the IEEE/CVF conference on computer vision and pattern recognition}, pp.\  19313--19322, 2022.

\bibitem[Zhang et~al.(2022)Zhang, Guo, Gao, Fang, Zhao, Wang, Qiao, and Li]{zhang2022point}
Renrui Zhang, Ziyu Guo, Peng Gao, Rongyao Fang, Bin Zhao, Dong Wang, Yu~Qiao, and Hongsheng Li.
\newblock Point-m2ae: multi-scale masked autoencoders for hierarchical point cloud pre-training.
\newblock \emph{Advances in neural information processing systems}, 35:\penalty0 27061--27074, 2022.

\bibitem[Zhang et~al.(2024)Zhang, Guan, Jiang, Wang, and Tan]{zhang2024brepmfr}
Shuming Zhang, Zhidong Guan, Hao Jiang, Xiaodong Wang, and Pingan Tan.
\newblock Brepmfr: Enhancing machining feature recognition in b-rep models through deep learning and domain adaptation.
\newblock \emph{Computer Aided Geometric Design}, 111:\penalty0 102318, 2024.

\end{thebibliography}
\bibliographystyle{paper}

\appendix

\section{Architecture Details}
\label{app:backbone}

A neural network cannot consume the graph directly, so we must turn each face into a fixed-size vector. UV-grid sampling does this without
meshing. Every parametric surface comes with a built-in two-dimensional coordinate system
$(u,v)$ and a map $S(u,v)\in\mathbb{R}^3$ that sends a point of a flat parameter
rectangle to a point on the surface in 3D. For a cylinder, for example, $u$ is
the angle around the axis and $v$ is the height. Even though the face is curved
in space, its parameter domain is flat, so we can lay a regular grid over that
domain.

We place a fixed $10\times 10$ grid of $(u,v)$ values over the face's parameter domain and evaluate the surface at each grid point. At each of the $100$ samples, we store $7$ channels: position ($3$-dim): the 3D point $S(u,v)$; normal ($3$-dim): the unit outward normal $n(u,v)$; trimming mask ($1$-dim): whether the sample lies inside the trimmed face or in the cut-away region, since the parameter rectangle is larger than the actual face. The result is a $10\times 10\times 7$ tensor. A surface CNN processes it into a single per-face embedding $h_i$, which is the \emph{node feature} in our face-adjacency graph.

A B-rep edge is a parametric curve with a single parameter $u$. We sample it on a length-$10$ grid and store $6$ channels per sample: $3$ for position and $3$ for the curve tangent, giving a $10\times 6$ tensor, which a 1D curve CNN turns into an edge embedding.

We now explain our architecture choices.
Our encoder ($27.9$M params, hidden width $H{=}384$) maps a
B-rep face-graph to (i)~a shape embedding $z\in\mathbb{R}^{512}$ and (ii)~a
per-face feature matrix $h\in\mathbb{R}^{N_F\times 384}$. The full data flow, with
exact layer specifications (input tensors in Appendix~\ref{app:preproc}):

\begin{enumerate}
\item \textbf{Face encoder (per-face CNN).} Each face's $10\times10\times7$
UV-grid is permuted to $[7,10,10]$ and passed through three $3\times3$ conv blocks
($\mathrm{Conv2d(pad\,1)}\to\mathrm{GroupNorm}\to\mathrm{GELU}$) with channel
widths $128\to H\to H$ (GroupNorm groups $8,16,16$), then
$\mathrm{AdaptiveAvgPool2d}(1)$, flatten, and $\mathrm{Linear}(H,H)$, giving a
face token in $\mathbb{R}^H$.
\item \textbf{Edge encoder (per-edge CNN).} Each edge's $10\times6$ UV-grid is
permuted to $[6,10]$ and passed through two $\mathrm{Conv1d}(3,\text{pad}\,1)\to
\mathrm{GroupNorm}\to\mathrm{GELU}$ blocks ($128\to H$),
$\mathrm{AdaptiveAvgPool1d}(1)$, flatten. To this we concatenate two $16$-d
learned embeddings of the edge's convexity class ($4$) and curve-type class ($5$),
then $\mathrm{Linear}(H{+}32,H)$. Each directed graph edge reads its parent edge
token by index.
\item \textbf{Graph message passing.} $N_{\text{gat}}{=}4$ GATv2 layers
\citep{brody2021attentive} ($\mathrm{GATv2Conv}(H\to H,\text{heads}{=}6,\text{concat}{=}
\text{False},\text{edge\_dim}{=}H)$), each using the edge token as an additive
attention feature, as a pre-residual with LayerNorm:
$x\leftarrow\mathrm{LayerNorm}(x+\mathrm{GATv2}(x,\text{edge\_index},
\text{edge\_attr}))$.
\item \textbf{Global self-attention.} Per-graph face tokens are padded to a dense
batch (with a key-padding mask) and passed through $N_{\text{tf}}{=}8$
TransformerEncoder layers ($d_{\text{model}}{=}384$, $\text{nhead}{=}6$,
$\text{dim\_feedforward}{=}1536$, GELU, $\text{norm\_first}{=}\text{True}$). The
per-face features $h$ are read out here (post-Transformer, unpadded) --- these
feed the MTM and region heads.
\item \textbf{Attention pooling.} A single learned query attends over the face
tokens via $\mathrm{MultiheadAttention}$ ($6$ heads), followed by LayerNorm and
$\mathrm{Linear}(384,512)$ to produce the shape embedding $z$.
\end{enumerate}

The $8$-layer Transformer depth matches the BRep-BERT scale (${\sim}22$M) and is
required to reach its F360 top-1; shallower variants plateau lower. The $9.4$M
ablation testbed is the same stack with $H{=}256$,
$N_{\text{tf}}{=}2$, $\text{nhead}{=}8$, $\text{ff}{=}1024$, MoCo off. Both scales
are instantiated by the same code; the only differences are the
width/depth/head hyperparameters in Appendix~\ref{app:hparams}.

\section{Contrastive Learning}
\label{app:contrastive}

We use momentum-queue contrastive learning~\citep{moco} at the shape-embedding
level. Each shape is encoded into two augmented views; view~A is encoded
by the online encoder $f_q$ (query $q{=}z_a$), view~B by a momentum encoder $f_k$
whose weights are an EMA of the online weights,
$\theta_k\leftarrow m\,\theta_k+(1{-}m)\theta_q$ with $m{=}0.999$. The momentum
encoder's outputs (key $k^+{=}z_b$) are enqueued into a FIFO queue of
$K{=}16{,}384$ past keys. The query is trained with InfoNCE,
\begin{equation}
L_{\text{NCE}} = -\log\frac{\exp(q\cdot k^+/\tau)}
{\exp(q\cdot k^+/\tau)+\sum_{k^-\in\text{queue}}\exp(q\cdot k^-/\tau)},
\end{equation}
temperature $\tau{=}0.07$. Each anchor contrasts its positive against in-batch
negatives (${\sim}63$ at batch $64$) \emph{plus} the full $16{,}384$-entry queue
--- roughly a $250\times$ increase over batch-only contrastive at the memory cost
of a single feature bank. The momentum encoder decouples key staleness from batch
size, making the large queue usable on a single node. BRep-BERT uses the same $K$
for a like-for-like contrastive scale.

\section{Method Details}

\subsection{MTM Architecture}
\label{app:mtm-details}
On view B, we sample a masked edge set $M\subset E$ with
$|M|{=}\max(1,\mathrm{round}(p_{\text{mask}}|E|))$, $p_{\text{mask}}{=}0.7$, and
delete it from the graph's edge index before the encoder forward. The encoder
$\phi$ runs GATv2 + Transformer layers over the masked graph, producing
per-face features
$h_i=\phi_i(\text{face grids};(F,E\setminus M))\in\mathbb{R}^{384}$. For each
masked edge $e{=}(i,j)\in M$ we form the ordered endpoint feature
$[h_i;h_j]\in\mathbb{R}^{768}$ and pass it through two independent heads, each
$\mathrm{Linear}(768,128)\!\to\!\mathrm{GELU}\!\to\!\mathrm{Dropout}(0.1)\!\to\!
\mathrm{Linear}(128,\cdot)$, with $4$ and $5$ output classes respectively. Both
heads are discarded after pretraining (${\approx}0.1$M params total). Masking is
applied to view~B only, because view~B feeds the auxiliary heads while view~A
carries the heavy contrastive augmentation.

\subsection{Augmentation Details}
\label{app:synth}
\textbf{Offline geometric perturbations.} For each source solid we precompute
$n_{\text{perts}}{=}5$ perturbed variants by applying one randomly chosen B-rep
operator (via the OpenCASCADE kernel), each validated for a watertight solid. The
operator is sampled from $14$ choices with fixed weights (full table below);
feature sizes are relative to the shape's bounding-box diagonal. A shape enters
pretraining only if $\geq\!\text{min\_perts}{=}2$ variants preprocessed
successfully; a training pair draws two \emph{different} stored variants of the
same source.

\textbf{Online graph augmentations.} Per view we apply: \emph{face drop}
(fraction $p_{\text{face}}{=}0.3$ of faces and incident edges, only if
$N_F{>}4$, capped at $12$); \emph{edge drop} ($p_{\text{edge}}{=}0.3$ of graph edges,
capped at $30$); \emph{connected-subgraph sampling} (with probability $0.5$, BFS from a
random seed keeping the first $\lceil 0.7\,N_F\rceil$ faces); \emph{coordinate
jitter} (additive Gaussian, std $0.005$ of normalized scale, on xyz); and
\emph{asymmetric masking} --- view~A receives the full heavy augmentation, view~B
only light augmentation ($5\%$ face drop, no edge drop or subgraph sampling).

\subsection{Face-Region Masked Reconstruction Details}
\label{app:region}
Applied to view~B, in four steps:
\begin{enumerate}
\item \textbf{Select region.} For each shape with $N_F{>}4$, BFS from a random
seed face along the face-adjacency graph, accumulating until
$\lceil 0.4\,N_F\rceil$ faces are covered (region ratio $r{=}0.4$) --- a single
\emph{connected} blob.
\item \textbf{Mask.} Replace the $10\times10\times7$ UV-grid of every selected
face with a single learnable region mask token $\in\mathbb{R}^7$ (init
$\mathcal{N}(0,0.02^2)$, broadcast over the grid) before the encoder forward.
\item \textbf{Reconstruct.} From each masked face's feature $h_i$, two heads
predict: geometry, via $\mathrm{Linear}(H,H)\to\mathrm{GELU}\to
\mathrm{Dropout}(0.1)\to\mathrm{Linear}(H,600)$ reshaped to $10\times10\times6$
(xyz+normal), trained with MSE against the original grid; and coarse geometric
face type (planar / developable / doubly-curved / freeform;
Appendix~\ref{app:preproc}) via $\mathrm{Linear}(H,128)\to\mathrm{GELU}\to
\mathrm{Dropout}(0.1)\to\mathrm{Linear}(128,4)$, trained with cross-entropy.
\item \textbf{Loss.} $L_{\text{region}}=\mathrm{MSE}(\text{geom})+
\mathrm{CE}(\text{type})$, weight $\lambda_{\text{region}}{=}1.0$.
\end{enumerate}

\section{Proofs for Section~\ref{sec:theory}}
\label{app:theory}

Throughout, the MTM head is scored by cross-entropy, and we use the standard fact
that for a target $Y$ and inputs $X$ the smallest expected cross-entropy over
measurable predictors equals the conditional entropy $H(Y\mid X)$, attained by the
true posterior $p(Y\mid X)$; in particular this minimum is $0$ iff $Y$ is
$\sigma(X)$-measurable (i.e.\ determined by $X$). The same conclusions hold verbatim
for $0/1$ loss with $H$ replaced by the Bayes error. Outward unit normals point away
from the material (interior) side, so the samples encode which side is solid.

\subsection*{Proof of Theorem~\ref{thm:ordering} (Identifiability)}

\begin{lemma}[Convexity is a local function of the normals]
\label{lem:conv}
Under \textup{(A1)}--\textup{(A2)}, the convexity $c(e)$ is a deterministic function
of the outward normal field of the two faces in an arbitrarily small neighborhood of
$\sigma_e$.
\end{lemma}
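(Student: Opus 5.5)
Fix a point $p\in\sigma_e$ in the relative interior of the shared curve. By (A2), $\sigma_e$ is the only place where faces $i$ and $j$ meet. Near $p$ the two faces are therefore two $C^1$ sheets that share the boundary curve $\sigma_e$ and lie on opposite sides of it.

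\textbf{Step 1 (local frame at $p$).} Let $\tau(p)$ be the unit tangent of $\sigma_e$ at $p$. By (A1) each face has a continuous outward unit normal $n_i(p)$ and $n_j(p)$, obtained as one-sided limits along the face. Each face also has an inward-pointing tangent direction $b_i(p)=n_i(p)\times\tau(p)$, oriented into face $i$, and similarly $b_j(p)$. The sign convention is fixed by the co-edge orientation; the outward normals already carry the material side. Define the signed dihedral quantity
\begin{equation}
s(p)=\bigl\langle n_i(p)\times n_j(p),\,\tau(p)\bigr\rangle .
\end{equation}
Equivalently, we can use $\langle n_i(p), b_j(p)\rangle$ together with $\langle n_i(p),n_j(p)\rangle$. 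This is the kernel's signed dihedral angle $\theta(p)\in(-\pi,\pi]$ between the outward normals, measured about $\tau$.

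\textbf{Step 2 (classification as a function of $\theta$).} The kernel's labelling is a fixed rule applied to $\theta$, or to its aggregate along $\sigma_e$:
\begin{itemize}
\item $\theta\approx 0$, i.e.\ normals parallel and the faces meeting tangentially (a $180^\circ$ interior angle), gives \emph{smooth}.
\item $|\theta|\approx\pi$, i.e.\ normals antiparallel and a zero-angle fold, gives \emph{knife}.
\item Otherwise the sign of $\langle n_i, b_j\rangle$ decides the case. If face $j$ bends away from the outward normal of $i$, the material is locally the intersection of half-spaces and the edge is \emph{convex}. If it bends toward it, the material is the union and the edge is \emph{concave}.
\end{itemize}
We verify this by a two-line check using half-space models of the material near $p$. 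The check uses the fact that outward normals point away from the interior, which is the convention stated above.

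Since $n_i,n_j,\tau,b_i,b_j$ at $p$ are all limits of the normal field and its associated tangent frame in any neighborhood of $\sigma_e$, $c(e)$ is a deterministic function of that field restricted to an arbitrarily small neighborhood.

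\textbf{Step 3 (well-definedness along the curve).} The label is assigned per edge, so we must check that the class does not depend on the choice of $p$. By $C^1$ continuity $\theta(p)$ is continuous along $\sigma_e$. Hence the sign regions are open, and the class is constant on connected components away from the tolerance thresholds. The kernel resolves any remaining variation by a fixed rule, for example evaluating at the parametric midpoint, and that rule is again a function of the normals near $\sigma_e$.

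\textbf{Main obstacle.} The hard step is the boundary bookkeeping: orienting $\tau$ and $b_j$ consistently so that the sign of $s(p)$ really matches "material inside versus outside". The other delicate point is the tolerance bands defining \emph{smooth} and \emph{knife}. I would handle both by modelling the kernel's label as a fixed measurable function of $\theta$, since the paper describes the labels as computed from the signed dihedral angle. The lemma then needs only that $\theta$ is determined by the local normal field, which Steps 1 and 2 give.
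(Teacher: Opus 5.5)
Your proposal is correct and follows essentially the same route as the paper. Both arguments write the material-side dihedral at a point $p\in\sigma_e$ as an explicit function of $n_i(p),n_j(p)$, the edge tangent and the conormals, read $c(e)$ off that angle by a fixed classification rule, and use (A2) only to guarantee a single shared curve; your Step~3 (constancy of the class along $\sigma_e$, with the midpoint rule as tie-breaker) and your explicit treatment of the per-face co-edge orientation behind $b_i,b_j$ tighten points the paper's proof passes over silently.
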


\begin{proof}
By definition, $c(e)$ is the class of the material-side dihedral angle $\theta(p)$
between the two faces along the shared curve: $\theta<\pi$ convex, $\theta>\pi$
concave, $\theta=\pi$ smooth (tangent), and the degenerate $\theta\in\{0,2\pi\}$
knife. At a point $p\in\sigma_e$, let $\hat t$ be a unit tangent to $\sigma_e$ and
$n_i(p),n_j(p)$ the two outward unit normals, which exist by (A1). The conormals
$m_k = \hat t\times n_k$ lie in each face's tangent plane and point across the edge;
$\theta(p)$ is the angle from $m_i$ to $m_j$ measured through the exterior, an
explicit continuous function of $\big(n_i(p),n_j(p),\hat t\big)$. By (A2) the edge
carries a single such curve, so no other adjacency interferes and $c(e)$ is the
class of $\theta$, a function of the normals on any neighborhood of $\sigma_e$.
\end{proof}

\paragraph{Convexity: infimum loss $0$.}
Consider the finite-resolution encoder that sets $h_v$ to a lossless encoding of
face $v$'s samples (permitted under (A3), which only upper-bounds resolution). By
(A3) those samples resolve the outward normals in a neighborhood of $\sigma_e$, so
the pair $(h_i,h_j)$ contains $n_i,n_j$ near $\sigma_e$; by Lemma~\ref{lem:conv}
there is a measurable $\Psi$ with $c(e)=\Psi(h_i,h_j)$. Then $H(c(e)\mid h_i,h_j)=0$
and the head equal to $\Psi$ attains cross-entropy $0$. Hence the smallest
achievable convexity loss is $0$.

\paragraph{Type: infimum loss $\ge \mathbb{E}[H(t(e)\mid\mathcal S_e)]>0$.}
Fix \emph{any} finite-resolution encoder. By (A3) each $h_v$ is a function of face
$v$'s finite sample set, so $(h_i,h_j)=g(\mathcal S_e)$ for some map $g$, where
$\mathcal S_e$ collects the samples of the two faces of $e$. The minimal type loss of
this encoder is $\mathbb{E}\!\left[H(t(e)\mid h_i,h_j)\right]$, and by the
data-processing inequality (conditioning on a function of $\mathcal S_e$ cannot
reduce entropy below conditioning on $\mathcal S_e$),
\[
\mathbb{E}\!\left[H(t(e)\mid h_i,h_j)\right]
\;\ge\;
\mathbb{E}\!\left[H(t(e)\mid \mathcal S_e)\right].
\]
By (A4) there is a positive-probability event on which $t(e)$ is not determined by
$\mathcal S_e$, i.e.\ $H(t(e)\mid\mathcal S_e)>0$ there; hence the right-hand side is
strictly positive. As the bound holds for every finite-resolution encoder, the
smallest achievable type loss is at least $\mathbb{E}[H(t(e)\mid\mathcal S_e)]>0$.

\paragraph{Ordering.}
The convexity floor is $0$ and the type floor is at least
$\mathbb{E}[H(t(e)\mid\mathcal S_e)]>0$, so the convexity head is trainable to a
strictly smaller loss than the type head. The type Bayes posterior puts its residual
mass exactly on the event of (A4)---curves a simpler analytic form matches at the
sampled resolution (near-circular ellipses, large-radius arcs, splines that locally
mimic conics). \qed

\subsection*{Proof of Theorem~\ref{thm:separation} (Separability)}

\paragraph{The witness (bump vs.\ dimple), explicit coordinates.}
Let the base solid be the slab $B=[-2,2]\times[-2,2]\times[-2,0]$, with top face at
$z=0$, and let $Q=[-1,1]\times[-1,1]\times\{0\}$ be a square inscribed in it. Fix any
height $0<h<2$ and apex $a=(0,0,h)$, $a'=(0,0,-h)$.
\begin{itemize}\itemsep2pt
  \item $S$ (\emph{bump}) $= B$ union the right pyramid with base $Q$ and apex $a$.
  \item $S'$ (\emph{dimple}) $= B$ minus the open right pyramid with base $Q$ and
  apex $a'$.
\end{itemize}
Both are valid closed solids: $h<2$ keeps the dimple apex $a'$ strictly above the
slab bottom $z=-2$, so $S'$ does not self-intersect.

\emph{Face bijection $\phi$.} Bottom, the four slab sides, and the top face (the
square frame $[-2,2]^2\setminus Q$ at $z=0$) occur identically in $S$ and $S'$; map
each to itself. The remaining faces are four triangles. In $S$ the east triangle is
$T_E=\mathrm{conv}\{(1,1,0),(1,-1,0),a\}$ (and analogously $T_S,T_W,T_N$); in $S'$ it
is $T_E'=\mathrm{conv}\{(1,1,0),(1,-1,0),a'\}$. Set $\phi(T_E)=T_E'$, etc.

\emph{$\phi$ preserves intrinsic geometry.} $T_E'$ is the reflection of $T_E$ through
$z=0$, hence congruent; each triangle is isosceles (base edge length $2$, two slant
edges $\sqrt{2+h^2}$), so it is congruent to its mirror image by a proper rotation
(about the perpendicular bisector of its base), and $\gamma(f)=\gamma(\phi(f))$ even
if $\gamma$ is taken up to orientation-preserving isometry. All non-triangular faces
are literally identical.

\emph{$\phi$ is a graph isomorphism.} In both solids the four triangles share slant
(``hip/valley'') edges pairwise at the apex and share their base edges with the top
frame; $B$'s faces meet in the same pattern. The incidence pattern is identical, so
$\phi\colon G(S)\to G(S')$ is an isomorphism.

\emph{A convexity flip.} Take $e$ = the slant edge shared by $T_E,T_S$, from corner
$(1,-1,0)$ to $a$. In $S$ this is a hip of a convex bump, so its material-side
dihedral is $<\pi$: $c(e)=\text{convex}$. Its image $\phi(e)$ runs from $(1,-1,0)$ to
$a'$ and is a valley of the indentation, material-side dihedral $>\pi$:
$c(\phi(e))=\text{concave}$. Thus $c(e)\neq c(\phi(e))$ while $(\gamma,G)$ agree,
establishing the hypotheses of the theorem.

\paragraph{(i) Face-level objectives admit a merging optimum.}
Let $J$ be any objective whose value depends on the representation only through
$(\gamma,G)$-measurable quantities. Suppose $R^\ast$ is a global optimum. Because
$\phi$ matches intrinsic geometry and adjacency, corresponding faces $(S,f)$ and
$(S',\phi(f))$ present identical $(\gamma,G)$ inputs. Define the symmetrized
representation $R^{\ast\ast}$ by $R^{\ast\ast}(S)_f=R^{\ast\ast}(S')_{\phi(f)}
:=R^{\ast}(S)_f$. Since $J$ reads only $(\gamma,G)$, which are unchanged by this
reassignment, $J(R^{\ast\ast})=J(R^{\ast})$, so $R^{\ast\ast}$ is also a global
optimum; it satisfies $h_f(S)=h_{\phi(f)}(S')$ for all $f$. On this optimum $S$ and
$S'$ carry identical embedding multisets and are indistinguishable to any readout.

\paragraph{(ii) A zero-MTM-loss representation must separate $S,S'$.}
First, no $(\gamma,G)$-measurable map $T$ is a sufficient statistic for $c$: such a
$T$ takes equal values on $e$ and $\phi(e)$ (identical $(\gamma,G)$), yet
$c(e)\neq c(\phi(e))$, so $c$ is not $\sigma(T)$-measurable. Now suppose a
representation attains zero MTM loss. Mask $e$ in $S$ and the corresponding
$\phi(e)$ in $S'$ (the masked graphs correspond under $\phi$). Zero loss forces the
head to output $c(e)$ and $c(\phi(e))$ exactly from the endpoint embeddings. If the
representation were $(\gamma,G)$-measurable, the endpoint embeddings for $e$ and
$\phi(e)$ would coincide (same intrinsic geometry, same position in the abstract
graph), forcing one head output for both---contradicting $c(e)\neq c(\phi(e))$.
Hence the representation is \emph{not} $(\gamma,G)$-measurable: it assigns distinct
embeddings to at least one corresponding endpoint pair, so $h_f(S)\neq h_{\phi(f)}(S')$
for some endpoint $f$ of $e$, and $S,S'$ are distinguishable.

\paragraph{Conclusion.}
By (i) a face-level objective admits an optimum merging $S$ and $S'$, while by (ii)
every zero-MTM-loss representation separates them. Thus on this class of solids
MTM-optimal representations are strictly more discriminative than any face-level
one. \qed

\section{Pretraining hyperparameters}
\label{app:hparams}
\begin{table}[h]
\centering
\begin{tabular}{lcc}
\toprule
Parameter & Headline ($27.9$M) & Ablation testbed ($9.4$M) \\
\midrule
Backbone hidden $H$ & 384 & 256 \\
GATv2 layers / heads & 4 / 6 & 4 / 8 \\
Transformer layers / heads / FFN & 8 / 6 / 1536 & 2 / 8 / 1024 \\
Output (shape-embed) dim & 512 & 512 \\
Parameter count & 27.9M & 9.43M \\
MoCo queue $K$ / momentum & 16{,}384 / 0.999 & off (0) \\
InfoNCE temperature $\tau$ & 0.07 & 0.07 \\
topo\_mask\_prob / topo\_aux\_weight & 0.7 / 0.5 & 0.7 / 0.3 \\
region\_mask\_weight / region ratio & 1.0 / 0.4 & 1.0 / 0.4 \\
Face drop $p_{\text{face}}$ (abs cap) & 0.3 (12) & 0.3 (12) \\
Edge drop $p_{\text{edge}}$ (abs cap) & 0.3 (30) & 0.3 (30) \\
Subgraph trigger / keep ratio & 0.5 / 0.7 & 0.5 / 0.7 \\
Coordinate jitter std (frac) & 0.005 & 0.005 \\
Optimizer & AdamW $\beta{=}(0.9,0.95)$ & AdamW $\beta{=}(0.9,0.95)$ \\
LR / weight decay & 1e-4 / 0.01 & 1e-4 / 0.01 \\
Schedule & 3-ep warmup $\to$ const & same \\
Epochs (from scratch) & 30 & 30 \\
Per-GPU batch $\times$ accum $\times$ ranks & $64\times16\times2$ & $64\times32\times1$ \\
Effective batch & 2{,}048 & 2{,}048 \\
Precision & bf16 autocast & bf16 autocast \\
Max faces / shape & 400 & 400 \\
Perturbations ($n_{\text{perts}}$ / $\text{min}$) & 5 / 2 & 5 / 2 \\
\bottomrule
\end{tabular}
\end{table}
\textbf{Combined loss.} $L = L_{\text{NCE}} + \lambda_{\text{topo}}L_{\text{topo}}
+ \lambda_{\text{region}}L_{\text{region}}$ with $\lambda_{\text{topo}}{=}0.5$,
$\lambda_{\text{region}}{=}1.0$. $L_{\text{NCE}}$ is MoCo InfoNCE (headline) or
symmetric in-batch NT-Xent ($9.4$M, MoCo off). Auxiliary heads are discarded after
pretraining; only the backbone transfers downstream.

\section{Preprocessing and input representation}
\label{app:preproc}
Every solid (a STEP file) is converted to a face-graph with UV-grid features using
the OpenCASCADE (OCC) kernel. All geometry below is deterministic given the solid;
no learned or stochastic component enters preprocessing.

\textbf{Faces $\to$ nodes.} For each face we build a $10\times10\times7$ grid by
sampling the parametric surface on a uniform $10\times10$ lattice over
$[u_{\min},u_{\max}]\times[v_{\min},v_{\max}]$ (degenerate ranges clamped to unit
length). The $7$ channels are: xyz position ($3$), unit surface normal ($3$,
flipped when the face orientation is \texttt{REVERSED} so it points outward), and a
trim mask ($1$) that is $1$ when the sample lies inside/on the face's trimmed
region (\texttt{BRepClass\_FaceClassifier}) and $0$ otherwise.

\textbf{Edges $\to$ typed graph edges.} For each B-rep edge adjacent to $\geq 2$
faces we build a $10\times6$ grid by sampling the edge curve on $10$ uniform
parameter values: xyz position ($3$) and unit tangent ($3$). The edge is
instantiated as directed graph edges between every ordered pair of its adjacent
faces. Each carries two categorical labels:
\begin{itemize}
\item \textbf{Convexity} $\in\{\text{concave}{=}0,\text{convex}{=}1,\text{smooth}
{=}2,\text{knife}{=}3\}$, computed at the edge midpoint from adjacent-face normals
$n_a,n_b$ and tangent $t$: $n_a\cdot n_b>0.999\to$ smooth, $<-0.999\to$ knife, else
$\mathrm{sign}((n_a\times n_b)\cdot t)$ gives convex ($+$) vs.\ concave ($-$).
\item \textbf{Curve type} $\in\{\text{Line}{=}0,\text{Circle}{=}1,\text{B-spline}
{=}2,\text{Ellipse}{=}3,\text{Other}{=}4\}$ from the OCC curve class
(\texttt{GeomAbs\_*}); B\'ezier/hyperbola/parabola/unknown fold into ``Other''.
\end{itemize}
These are exactly the labels MTM predicts; at input time they also
condition the edge CNN via $16$-d learned embeddings.

\textbf{Normalization.} xyz channels of both face and edge grids are centered on
the shape's bounding-box center and divided by its bounding-box diagonal, giving a
unit-diagonal shape. Grids are stored in fp16.

\textbf{Coarse face type (region target).} For the region objective we derive a
$4$-way face type from the PCA spread of the face's grid normals: \textbf{planar}
(total normal spread $<0.005$), \textbf{developable} (smallest eigenvalue $<0.01$,
largest $>0.03$ --- cylinders/cones), \textbf{doubly-curved} (all eigenvalues
nonzero, ratio $>0.15$ --- spheres/tori), else \textbf{freeform}. This is a target
only, not an input channel in the headline model.

\section{Synthetic-data generation and pretraining corpus}
\label{app:synth}
\textbf{Corpus.} Pretraining uses real ABC solids plus a procedurally generated
cadquery synthetic corpus. A source is kept only if $\geq 2$ of its $5$
perturbed variants preprocessed into valid solids; $2\%$ of sources are held out for validation loss.

\textbf{Perturbation operators.} Each of the $n_{\text{perts}}{=}5$ stored variants of a source is produced by sampling \emph{one} operator from the distribution below and applying it with OCC, retrying/validating for a watertight solid (feature sizes are relative to the bounding-box diagonal $d$).

\begin{table}[h]
\centering
\begin{tabular}{lcl}
\toprule
Operator & Prob. & Key parameters \\
\midrule
Rigid + uniform transform & 0.15 & rotate $\pm\pi$/translate $\pm0.5d$/mirror/scale $\in[0.7,1.4]$ \\
Anisotropic scale & 0.10 & independent x/y/z scale $\in[0.7,1.4]$ \\
Shear & 0.07 & one axis by another, angle 5--20$^\circ$ \\
Drill through-hole & 0.08 & cylinder cut, radius $\in[0.05,0.20]$ extent, $\leq0.10d$ \\
Multiple holes & 0.07 & 2--3 drilled holes \\
Add boss & 0.07 & fused cylinder, radius $\leq0.08d$, height $0.05$--$0.15d$ \\
Multiple bosses & 0.06 & 2--3 bosses \\
Rectangular slot & 0.07 & through-cut box, len $0.20$--$0.50$, width $0.05$--$0.15$ extent \\
Rectangular pocket & 0.06 & blind box cut, depth $0.05$--$0.15d$ \\
Rib & 0.07 & fused thin box, len $0.30$--$0.60$, width $0.03$--$0.10$ extent \\
Fillet & 0.07 & 1 or 3 edges, radius $0.01$--$0.04d$ \\
Variable fillet & 0.06 & 3--5 edges, independent radii \\
Chamfer & 0.04 & 1 or 3 edges, distance $0.01$--$0.04d$ \\
Variable chamfer & 0.03 & 3--5 edges, independent distances \\
\bottomrule
\end{tabular}
\end{table}
Two views of a training pair are two \emph{distinct} stored variants of the same
source, each then passed through the online graph augmentations.

\section{Downstream evaluation protocols}
\label{app:downstream}
All downstream tasks load a pretrained backbone and read post-Transformer per-face
features (segmentation/probe) or the pooled shape embedding (classification).
Unless stated, optimization is AdamW ($\beta{=}(0.9,0.95)$), inputs capped at
$1{,}000$ faces, features/grids as in Appendix~\ref{app:preproc}.

\textbf{Frozen linear probe (F360).} The backbone is frozen; we
extract per-face features for $3{,}000$ F360 shapes, split $80/20$ stratified by
class (fixed by seed), and fit a multinomial logistic regression
($\text{max\_iter}{=}2000$, $C{=}1.0$, scikit-learn). We report top-1, balanced
accuracy, macro-F1, and macro/weighted mIoU on the held-out $20\%$. No backbone
weights are updated, so the number reflects representation quality directly.

\textbf{F360 full-data segmentation.} End-to-end
fine-tune of backbone + a per-face head $\mathrm{Linear}(H,H)\to\mathrm{GELU}\to
\mathrm{Dropout}(0.1)\to\mathrm{Linear}(H,8)$. LR 1e-4, weight decay 0.01, $50$
epochs, batch $16$, $2$ warmup epochs, cosine decay, uniform LR.

\textbf{F360 few-shot segmentation.} BRep-BERT
Table-5 protocol, $8$-way. Support set: for each class, take $\text{shots}\in
\{10,20\}$ shapes whose face-label set contains that class; the train set is their
union. Fine-tune end-to-end with layer-wise LR decay $0.85$, base LR 5e-5, weight
decay 0.05, cosine, $50$ epochs, batch $8$; evaluate on the full test split.

\textbf{SolidLetters classification.} $10$-way support
for few-shot or the full train split for full-data; a linear head on the pooled
shape embedding. Base LR 5e-5, layer-wise decay $0.85$, weight decay 0.05, cosine,
$50$ epochs, batch $64$.

\textbf{MFInstSeg label-ratio segmentation.}All six
rows share one protocol: a random $N\%\in\{0.1,0.5,1,100\}\%$ of the official
AAGNet $70/15/15$ train split ($44/219/437/43{,}729$ shapes), evaluated on the same
$9{,}375$-shape test split; the metric is per-face micro accuracy over the $25$
classes ($24$ features $+$ stock). We report mean$\pm$std over seeds ($3$ for each
low ratio, $2$ for $100\%$; our model uses $5$ seeds at $0.1\%$ and $4$ at $1\%$).

Our model and the from-scratch control fine-tune the backbone end-to-end with a per-face head $\mathrm{Linear}(H,H)\to
\mathrm{GELU}\to\mathrm{Dropout}(0.1)\to\mathrm{Linear}(H,25)$ ($24$ features $+$
stock), layer-wise LR decay $0.85$ (backbone layer $l$ gets
$\text{LR}\cdot0.85^{(L-l)}$, $L{=}1{+}N_{\text{gat}}{+}N_{\text{tf}}{+}1$), base
LR 5e-5, weight decay 0.05, cosine. The $0.1\%$ cell uses $100$ epochs at batch
$8$, $0.5\%/1\%$ use $50$ epochs at batch $32$, and $100\%$ uses $30$ epochs at
batch $64$; the smallest budget is gradient-starved at $50$ epochs. Under bf16
autocast a single degenerate batch can NaN-poison the weights through gradient
clipping; we guard by skipping any step with non-finite loss or gradient norm, and
exclude the rare seed that still collapses (reported as NaN-collapsed, not averaged
in). GAT, GraphSAGE, UV-Net, and AAGNet are trained
from scratch on identical inputs --- the $10$-dim face attribute and $7$-channel
face UV-grid that AAGNet consumes --- so the comparison isolates architecture. GAT
and GraphSAGE are generic $3$-layer message-passing encoders (edges carry no
attributes); UV-Net uses its own surface-conv $+$ NNConv graph encoder with the
$12$-dim edge attribute fed through a small MLP in place of its curve branch (this
AAG has no edge UV-grids); AAGNet uses its authors' gAAG-GNN with the full
instance$+$bottom multitask loss and EMA. All four train with AdamW, LR 1e-2,
cosine, cross-entropy, $100$ epochs at batch $128$ for the capped ratios and $30$
epochs at batch $256$ for $100\%$.

\textbf{CADSynth label-ratio segmentation.}
Identical head, optimizer, and layer-wise LR schedule to the MFInstSeg protocol
above ($25$ classes: $24$ machining features + stock), with two differences. First,
we use CADSynth's \emph{official} $80/10/10$ split and the $10{,}000$ val shapes are held out entirely. We use base LR 5e-5, layer-wise decay $0.85$, weight decay 0.05, cosine, batch $8$; $100$ epochs at $0.1\%$ and $50$ at $0.5\%/1\%$.

\textbf{MFCAD few-shot segmentation.} Per-face head over $16$ feature
classes; $\text{shots}$ shapes per class for training, remainder for test; optional
$10$-way subset to match the BRep-BERT/BRepMAE protocol. LR 1e-4, weight decay
0.01, $100$ epochs, batch $8$.

\end{document}